\documentclass{podraft}

\usepackage[english]{babel}
\usepackage[utf8]{inputenc}
\usepackage[T1]{fontenc}
\usepackage{mathtools}
\usepackage{amsfonts}
\usepackage{amssymb}
\usepackage{microtype}
\usepackage{graphicx}
\usepackage{booktabs}
\usepackage{wrapfig}
\usepackage{subcaption}
\usepackage{placeins}
\usepackage{listings}
\usepackage{medmath}
\usepackage{hyperref}
\usepackage[noabbrev,capitalize]{cleveref}

\let\origsubsection\subsection
\renewcommand{\subsection}{\FloatBarrier\origsubsection}

\DeclareMathOperator{\msum}{\medmath\sum}

\title{Coordinate-Free Automatic Differentiation}

\author{Alex Guerra$^1$ \quad Joshua Aduol$^1$ \quad Yaqian Tang$^1$ \quad Daniel Williams$^1$ \quad Ryan P. Adams$^1$ \quad Peter Orbanz$^2$}
\affiliation{%
  $^1$Department of Computer Science, Princeton University, Princeton, NJ 08540 \\
  \texttt{ag5008@princeton.edu}, \texttt{jaduol@princeton.edu}, \texttt{yt1893@princeton.edu}, \texttt{dw1978@princeton.edu}, \texttt{rpa@princeton.edu} \\[0.5em]
  $^2$Gatsby Computational Neuroscience Unit, University College London \\
  \texttt{porbanz@gatsby.ucl.ac.uk}
}

\begin{document}

\begin{abstract}
Automatic differentiation (AD) is a cornerstone of modern machine learning. Since all AD frameworks implicitly fix a basis of Euclidean space to define gradients and Jacobians, they natively process mathematical objects that are arrays of a fixed set of coordinates. This makes it awkward to express objects such as rotations, volume forms, and curvatures that arise, for example, in applications of AI to science. Bespoke workarounds lack reusability. We show that (i) a single, basis-independent derivative operator can represent many geometric differential operations, and (ii) this operator can be expressed entirely in terms of existing AD primitives. This reusable transformation can be composed with JAX batching, compilation, and hardware acceleration. Geometry enters as algebraic data. We experimentally demonstrate its reuse across different manifold operators, its composition through a spacetime inverse-problem pipeline, and its use in optimization over different rotor subgroups.
\end{abstract}

\maketitle

\section{Introduction}

Automatic differentiation (AD) plays a pivotal role in modern machine learning. On the one hand, it provides convenience and efficiency: forward computations efficiently and automatically compute gradients and tangents via the chain rule.
On the other hand, it is a powerful programming abstraction that turns differentiation into a function transformation: if a function is implemented using differentiable primitives, it can be turned into another function that computes its local linearization, its adjoint, or higher-order variants, and these transformed functions can be composed with batching, compilation, and optimization.
One-off calculations involving derivatives become reusable software objects~\citep{baydin2018automatic, jax2018github}.

Current AD systems are built around coordinate arrays, which work well if derivatives are standard Euclidean Jacobians; this is often the case for the high-dimensional scalar objectives common in deep learning.
Simple coordinate arrays do not, however, directly cover other geometric objects and operators, such as fields on manifolds, rotations and rigid motions, curvature, volume forms, divergence, curl, or covariant derivatives.
These objects depend on additional geometric structure, such as a metric, an orientation, or a tangent space, none of which is carried by the differential that AD computes. Consequently, in domains where such objects are central, including physics, graphics, and engineering, differentiation is often not fully automatic.
We address the problem by identifying a single, common source of the difficulty.
Because the differential carries no geometric structure, that structure must be supplied by hand via a representation: Christoffel symbols in a chart, cotangent weights on a mesh, a stiffness matrix for a basis.
Once this representation is fixed, the resulting operator is tied to it, and the same underlying operator must be re-derived for each new space.
From a mathematical point of view, basis dependence is not inherent: coordinate-free formalisms, such as differential forms and exterior calculus, avoid this by describing geometric objects and operators independently of any particular coordinates.

We extend the AD philosophy of composable function transformations from flat-space AD to a fully coordinate-free formalism. The formalism used here is known as geometric algebra~\citep{hestenes1999new, Doran_Lasenby_2003}. It (i) provides a coordinate-free multivector derivative that incorporates all relevant metric and orientation information algebraically, and (ii)
aligns with the geometric-algebra representations already used in parts
of machine learning~\citep{brandstetter2023clifford, ruhe2023geometric, brehmer2023geometric}. Concretely, our contributions are:
\begin{itemize}
\item We show how the multivector derivative can be represented in terms of JAX primitives. It can therefore be implemented as a composable JAX transformation that supports forward and reverse mode AD, higher-order differentiation, and batched evaluation.
\item By specializing this derivative, we obtain automatic computation of curl, divergence, covariant derivatives, shape operators, curvature tensors and related operators on any (pseudo-) Riemannian manifold, in any number of dimensions.
\item We demonstrate (i) the framework’s feasibility and (ii) reuse across applications in physics and machine learning, such as divergence-free flows on manifolds, electromagnetic inverse problems, and rotor-group optimization.
\end{itemize}
Just as \texttt{jax.grad} transforms a scalar function into its Euclidean gradient, our framework transforms a multivector function into its coordinate-free derivative.
Because it is built entirely from JAX primitives, it composes with other JAX transformations and models.

\section{Related Work}

\noindent{\bf Automatic differentiation}
computes derivatives of programs by systematic
application of the chain rule~\citep{baydin2018automatic, griewank2008evaluating}
and is implemented in modern systems such as JAX, PyTorch, and TensorFlow
\citep{jax2018github, paszke2019pytorch, abadi2016tensorflow}. More mathematical accounts make the role of coordinates explicit, and relate forward and reverse mode to tangent and cotangent constructions~\citep{betancourt2018geometric, elliott2018simple}. Our
work keeps the composable transformations of modern AD, but replaces the
Euclidean coordinate derivative with a coordinate-free geometric derivative.

\noindent{\bf Existing extensions}
of AD beyond plain Euclidean arrays include
forward-mode AD for multivector-valued functions
\citep{tingelstad2017automatic, tingelstad2017motor, tingelstad2018motor},
manifold and Lie-group AD methods that use e.g.~charts or retractions to differentiate structured spaces
\citep{sommer2013automatic, sola2021micro, deray2020manif}, and recent work on
jet functors and Weil algebras in JAX~\citep{anonymous2025jet}.
In these methods, the geometry enters through a chosen local representation of
the space being differentiated. In our method, the derivative is first defined
algebraically on multivectors; manifold structure enters afterward through
projection onto the tangent space.

\noindent{\bf Geometric algebra},
or Clifford algebra, combines the inner and outer products
into the geometric product
\citep{hestenes1966spacetime,hestenes1999new}. Geometric calculus develops the associated
multivector derivative and recovers many operators of Riemannian differential
geometry without charts or Christoffel symbols
\citep{hestenes1968multivector, lasenby1993multivector, doran1994geometric}.
It is closely related to other coordinate-free formalisms such as differential
forms and exterior calculus, and complementary to discretized approaches such as
DEC and FEEC
\citep{arnold2010finite,
  hirani2003discrete, desbrun2005discrete, crane2013digital}.
  A broad ecosystem of GA libraries exists, e.g.~\citep{hadfield2013clifford, bromborsky2019galgebra, kahlow2020jaxga, kahlow2020tfga,
  roelfs2025kingdon, hoogendoorn2025numga}.

\noindent{\bf Geometric algebra in machine learning}
Geometric algebra has appeared in machine learning as a representation language for equivariant architectures, including Clifford neural layers, geometric Clifford algebra networks, the Geometric Algebra Transformer, and Clifford-steerable CNNs
\citep{brandstetter2023clifford, ruhe2023geometric, ruhe2023clifford,
  brehmer2023geometric, dehaan2024euclidean, spinner2024lorentz,
  zhdanov2024clifford}. These works are not concerned with geometric calculus or
automatic computation of differential operators; rather, they show that
multivector representations are already useful in ML and provide natural
  downstream settings for our AD operators.

\section{The Multivector Derivative}

\label{sec:mv-derivative}

This section describes our approach.
We define the multivector derivative and then translate this derivative into AD primitives.
This allows us to compute a range of differential operators in ``flat'' spaces.
In Section \ref{sec:manifold-calculus}, we generalize the approach to manifolds via projections onto tangent spaces.

\subsection{Definitions}

We begin with an~${n}$-dimensional vector space~${V}$, with an inner product~${\langle\cdot,\cdot\rangle}$.
The \emph{outer product}~${a_1\wedge\cdots\wedge a_k}$ of~${k}$ vectors represents an oriented,~${k}$-dimensional volume element if the vectors are linearly independent (and vanishes otherwise), and is called a~${k}$-\emph{blade}.
A sum of~${k}$-blades is a grade~${k}$ multivector (also called a~${k}$-vector).
Geometric algebra adds an additional product operation to~${V}$, called the
\emph{geometric product} and given by
\begin{equation}
  \label{geometric:product}
    ab = \langle a, b\rangle + a\wedge b \qquad\text{ for any }a,b\in V\;.
\end{equation}
Observe this implies ${\langle a, b\rangle = \frac12(ab+ba)}$ and ${a\wedge b=\frac12(ab-ba)}$.
Closing~${V}$ under this product produces the \emph{geometric algebra}~${\Lambda(V)}$.
This is a vector space that contains the original vectors,
but also oriented area elements, volume elements, and sums of these objects.
If~${b_1,\ldots,b_n}$ is a basis of~${V}$, a basis of~${\Lambda(V)}$ can be obtained
by forming every possible~${k}$-blade from the~${n}$ basis vectors
\begin{equation*}
  b_J\;:=\;b_{j_1}\wedge\ldots\wedge b_{j_k}
  \quad\text{ for }J:=\{j_1,\ldots,j_k\}\subset[n]\;.
\end{equation*}
 We denote the subspace of grade-$k$ elements $\Lambda^k(V)$, and the grade-$k$ part of $X$ by ${\langle X\rangle_k}$. The inner product~${\langle\cdot,\cdot\rangle}$ extends from~${V}$ to all of~${\Lambda(V)}$ as the bilinear, scalar-valued form ${\langle A, B\rangle := \langle AB\rangle_0}$, and~${\wedge}$ extends by grade selection ${A\wedge B := \langle AB\rangle_{p+q}}$ for ${A\in\Lambda^p(V), B\in\Lambda^q(V)}$. We also define the \emph{contraction} $A \rfloor B := \langle AB\rangle_{q-p}$, and is $0$ if $q - p < 0$. These definitions are all standard; see e.g.~\citet{macdonald2017survey, hestenes1999new}.

Any element~${X}$ of~${\Lambda(V)}$ can now be represented by expanding it in the basis above as
\begin{equation*}
    X = \msum_{J\in\mathcal I} X_J b_J\qquad\text{ for some }\mathcal{I}\subset 2^{[n]}\;,
\end{equation*}
for scalar coefficients~${X_J}$, called \emph{coordinates}.
We can always, of course, choose~${\mathcal{I}=2^{[n]}}$ and set~${X_J=0}$ if~${b_J}$ does not contribute,
but explicitly using a smaller set~${\mathcal{I}}$ can be useful to restrict attention to certain types of
objects. For example, if we choose~${\mathcal{I}}$ as the set of singleton sets~${\{1\},\ldots,\{n\}}$, then~${X}$ is an element of the original vector space~${V=\Lambda^1(V)\subset\Lambda(V)}$.
\begin{remark}
We are interested in low dimensional geometry. In high dimensions, computational costs can become prohibitive, since the space
${\Lambda(V)}$ has dimension~${2^n}$.
\end{remark}

\subsection{The Multivector Derivative}

We consider \emph{multivector functions} ${F:\Lambda(V)\to\Lambda(V)}$, or functions defined on the linear subspace~${W\subset\Lambda(V)}$ spanned by the basis blades~${\{b_J\}_{J\in\mathcal{I}}}$ for some index set~${\mathcal{I}\subset 2^{[n]}}$. A nondegenerate~${\langle \cdot, \cdot \rangle}$ on~${W}$ implicitly defines a reciprocal basis~${\{b^J\}}$ satisfying ${\langle b^J, b_K\rangle = \delta_{JK}}$.
The \emph{multivector derivative} of~${F}$ indexed by~${\mathcal{I}}$ is defined as
\begin{equation}
  \label{eq:flat_derivative}
    \partial_{\mathcal I}F
    :=
    \msum_{J\in\mathcal{I}} b^J\mfrac{\partial F}{\partial X_J}
    \qquad\text{ for }\mathcal{I}\subset 2^{[n]}\text{ and }X=\msum_{J\in\mathcal{I}}X_Jb_J\;.
\end{equation}
The multiplication in the sum is the geometric product. Four properties of~${\partial_\mathcal{I}}$ are worth emphasizing:
\begin{itemize}
\item ${\partial_{\mathcal{I}}}$ is coordinate-free.
Eq.~\ref{eq:flat_derivative} depends only on this inner product (which defines the~${\{b^J\}}$),~${W}$, and the geometric product, but not on the particular choice of basis~${\{b_J\}_{J\in\mathcal{I}}}$ spanning~${W}$ (Appendix~\ref{sec:basis-indep}). The only requirement is that the inner product be non-degenerate on~${W}$.
\item $\partial_\mathcal{I} F$ is intrinsic to $\Lambda(V)$, in contrast to the Jacobian matrix of a function on $\mathbb{R}^n$, which lives in $\mathbb{R}^{n^2}$.  In fact, $\partial_\mathcal{I} F$ can be a defined as a projection $\textrm{End}(\Lambda(V)) \to \Lambda(V)$.
\item All differential operators relevant to our purposes are special cases of~${\partial_\mathcal{I}}$.
\item As we will see below,~${\partial_{\mathcal{I}}}$ can be mapped to JAX primitives.
\end{itemize}

\subsection{From the Definition to AD primitives} \label{sec:implementation}

The derivative above can be implemented using AD primitives for Jacobian-vector products.
In JAX, the relevant primitives are
\texttt{jvp} and \texttt{vjp}. We stylize notation slightly, and write~${\texttt{jvp}(F,X,Z)}$ for \texttt{jvp(F, (X,), (Z,))}, etc.
The Jacobian-vector product computed by forward-mode AD is the directional derivative, or differential applied to a tangent vector:
\begin{equation*}
  \texttt{jvp}(F,X,A)
  =
  dF_X[A]
  :=
  \left.\frac{d}{dt}F(X+tA)\right|_{t=0}
\end{equation*}
of~${F}$ at~${X}$ in direction~${A}$. Reverse mode is more subtle; the vector-Jacobian product is meant to pull back an element of~${\Lambda(V)^*}$, a linear functional on~${\Lambda(V)}$. While this pullback is naturally defined (metric-free), AD stores this map as coordinate data where it assumes~${\Lambda(V) = \Lambda(V)^\ast}$. For more general settings, we require the \emph{musical isomorphisms} ${\flat: \Lambda(V)\to \Lambda(V)^\ast}$ and ${\sharp :\Lambda(V)^\ast\to \Lambda(V)}$, which are the identity on coordinates only when all~${b_J}$ are their own reciprocals. With these maps, we can define a ``metric aware'' vector-Jacobian product as:
\begin{equation}
    \label{eq:vjpmetric}
    \texttt{vjp}_{\langle\cdot,\cdot\rangle}(F,X,C) := \texttt{vjp}(F,X,C^\flat)^\sharp = dF_X^\ast[C]\quad\text{where}\quad
  \big\langle dF_X^\ast[C],A\big\rangle=\big\langle C,dF_X[A]\big\rangle \,.
\end{equation}

With this reverse-mode primitive defined, we introduce the following elementary lemma, which shows how the multivector derivative arises from these primitives.

\begin{lemma}
  \label{lem:mv-jvp}
  Let ${F:\Lambda(V)\to\Lambda(V)}$ be a differentiable function. Let~${\{b_J\}_J}$ be basis blades, with reciprocal blades~${\{b^J\}_J}$, and~${\mathcal{I}\subset 2^{[n]}}$.
  If~${X=\sum_{J\in\mathcal I}X_J b_J}$ ranges over the linear subspace of~${\Lambda(V)}$ defined by~${\mathcal{I}}$, then
 \begin{equation*}
   \partial_{\mathcal I}F(X)
   =
   \msum_{J\in\mathcal I}
   b^J\,\text{\tt{jvp}}(F,X,b_J)\;.
 \end{equation*}
 Similarly, if~${\mathcal{O}}$ indexes a set of basis blades in the codomain of~${F}$, then
 \begin{equation*}
      \partial_{\mathcal O}F(X)
      =
      \msum_{K\in\mathcal O}
      \text{\tt vjp}_{\langle \cdot, \cdot \rangle}(F,X,b^K)\, b_K
      \;.
 \end{equation*}
\end{lemma}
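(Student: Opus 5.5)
The first identity follows directly from the definitions, so I would start there. Fix $X=\sum_{J\in\mathcal I}X_Jb_J$ and regard $F$ as a function of the coordinates $(X_J)_{J\in\mathcal I}$. For a fixed $J$, the partial derivative $\partial F/\partial X_J$ is the derivative of $t\mapsto F(X+tb_J)$ at $t=0$, because varying the coordinate $X_J$ by $t$ with the other coordinates held fixed moves $X$ to $X+tb_J$. By the definition of \texttt{jvp} as a directional derivative, this is $dF_X[b_J]=\texttt{jvp}(F,X,b_J)$. I would substitute this into Eq.~\ref{eq:flat_derivative}; the geometric product with $b^J$ acts on the left, is linear, and is unaffected by the substitution. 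This gives the first formula.

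For the second identity I would expand the derivative in the codomain. Let $\{b_K\}_{K\in\mathcal O}$ span the codomain subspace, with reciprocals $b^K$ for the (assumed nondegenerate) inner product on that subspace. Every value $Y$ in the codomain satisfies $Y=\sum_K\langle b^K,Y\rangle b_K$; I would check this by pairing both sides with each $b^L$ and using $\langle b^L,b_K\rangle=\delta_{LK}$. Applying this to each $dF_X[b_J]$ and inserting it into the first formula gives
\begin{equation*}
  \partial_{\mathcal I}F(X)=\msum_{J}\msum_{K} b^J\,\big\langle b^K, dF_X[b_J]\big\rangle\, b_K .
\end{equation*}
Because the inner products are scalars, they commute with $b^J$, so I can rewrite the expression as $\sum_K\big(\sum_J\langle b^K,dF_X[b_J]\rangle b^J\big)b_K$.

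By the defining property in Eq.~\ref{eq:vjpmetric}, $\langle b^K,dF_X[b_J]\rangle=\langle dF_X^\ast[b^K],b_J\rangle$. I would then use the dual expansion in the domain, $A=\sum_J\langle A,b_J\rangle b^J$ for $A$ in the domain subspace $W$, which is proved the same way as the codomain expansion. This turns the inner sum into $dF_X^\ast[b^K]=\texttt{vjp}_{\langle\cdot,\cdot\rangle}(F,X,b^K)$. The result is $\sum_K\texttt{vjp}_{\langle\cdot,\cdot\rangle}(F,X,b^K)\,b_K$. Since this equals $\partial_{\mathcal I}F(X)$, I read the $\partial_{\mathcal O}$ in the statement as the same derivative, expressed through codomain blades.

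I expect the main obstacle to be bookkeeping rather than any deep step. The adjoint $dF_X^\ast[b^K]$ must land in the domain subspace $W$; this is where the $\sharp$ map relative to the inner product restricted to $W$ is needed. The symmetry of $\langle\cdot,\cdot\rangle$ must also be used when applying the expansion identities. Finally, the geometric products must keep their order ($b^J$ on the left in the first formula, $b_K$ on the right in the second), which is fine here because only scalar factors are moved past multivectors.
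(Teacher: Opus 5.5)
Your proposal is correct and matches the paper's proof step for step. The forward identity comes from $\partial F/\partial X_J = dF_X[b_J]$. The reverse identity comes from the codomain expansion, the adjoint relation $\langle b^K, dF_X[b_J]\rangle = \langle dF_X^\ast[b^K], b_J\rangle$, and the domain dual expansion. Your observation that $dF_X^\ast[b^K]$ must lie in $W$, with reciprocals taken relative to the inner product restricted to $W$, makes explicit an assumption the paper's proof uses without stating it.
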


See Appendix~\ref{sec:proofs} for the proof and Appendix~\ref{sec:api} for implementation details.
\begin{remark}
If~${\mathcal{I}}$ contains all input directions and~${\mathcal{O}}$ all output directions present in~${F}$, then~${\partial_\mathcal{I}=\partial_\mathcal{O}}$, but costs may differ: Forward mode makes~${O(|\mathcal{I}|)}$ {\tt jvp} calls, reverse mode~${O(|\mathcal{O}|)}$ {\tt vjp} calls. If~${F}$ is scalar, for example, then~${|\mathcal{O}| = 1}$ and a single {\tt vjp} suffices, recovering the familiar reverse-mode advantage of standard {\tt jax.grad}.
\end{remark}

\subsection{The Vector Derivative}

We are particularly interested in the case ${\mathcal{I}_1 :=\{\{1\},\ldots,\{n\}\}}$, where all indices are singletons. In this case,~${\partial_{\mathcal{I}_1}}$ becomes the \emph{vector derivative} ${\partial_{\mathcal{I}_1} := \sum_{i=1}^n b^i\,\partial/\partial x_i}$. If we take the geometric product of~${\partial_{\mathcal{I}_1}}$ and a grade-${k}$ function~${F}$, Eq.~\ref{geometric:product} shows that the result splits into exactly two grades,
\begin{equation} \label{eq:grade_decomp}
    \partial_{\mathcal{I}_1} F
    =
    \underbrace{\partial_{\mathcal{I}_1}\wedge F}_{\text{grade $k+1$}}
    +
    \underbrace{\partial_{\mathcal{I}_1}\rfloor F}_{\text{grade $k-1$}} .
\end{equation}
The grade-raising part~${\partial_{\mathcal{I}_1}\wedge F}$ is the exterior derivative.
If~${F}$ is scalar valued, this is the ordinary gradient. If~${F}$ is vector valued,~${\partial_{\mathcal{I}_1}\wedge F}$ is the curl-like component of~${F}$, and grade-lowering summand~${\partial_{\mathcal{I}_1}\cdot F}$ is the divergence-like component~\citep{hestenes1968multivector, lasenby1993multivector, hitzer2002multivector, macdonald2017survey}. The reason the case of the vector derivative is of primary importance is for computations on manifolds, as we expound on next.

\subsection{From Flat Spaces to Manifolds}
\label{sec:manifold-calculus}

The approach developed so far assumes a ``flat'' ambient space. A simple construction generalizes it to curved manifolds. Let~${M}$ be an~${m}$-dimensional manifold. We will represent~${M}$ as a ``vector manifold'' \citep{Hestenes_Sobczyk_1984, hestenes2011shape}, with points~${x \in M}$ represented as vectors (in~${\Lambda^1(V)}$),
and a tangent space of~${M}$ at~${x}$ given by a blade~${I(x)}$ of grade~${m}$.\footnote{Vector Manifolds motivate why the vector derivative is of primary concern for this work, as all functions on manifolds can now be viewed as functions from ${M\subset \Lambda^1(V)\to \Lambda(V)}$. One could in principle also attempt to study `bivector manifolds', etc., but this does not embed cleanly into differential geometry.} The projection of a function~${F:M\to\Lambda(V)}$ onto the tangent space at~${x}$ is given by
\begin{equation} \label{eq:projection}
    P(F)(x) := (F(x) \rfloor I^{-1}(x)) \rfloor I(x)
\end{equation}
A grade~${k}$ function~${F}$ is called a \textit{field} on~${M}$ if~${P(F) = F}$. By projecting the vector derivative operator~${\partial_{\mathcal{I}_1}}$, we obtain the \emph{intrinsic} derivative:
\begin{equation} \label{eq:relative_derivative}
  \partial_M := P(\partial_{\mathcal{I}_1})
\end{equation}
Unfortunately,~${\partial_M F}$ is not necessarily a field, even if~${F}$ is. This can be remedied by simply defining the \emph{covariant derivative} to be the projection of~${\partial_M F}$:
\begin{equation} \label{eq:covariant_derivative}
    \nabla_M F := P(\partial_M F) = P(P(\partial_{\mathcal{I}_1})F)\;
\end{equation}
The relevance of these definitions is:
\begin{itemize}
\item As we will see in more detail in Section \ref{sec:experiments}, standard differential operators on manifolds can be expressed in terms of~${\partial_M}$ and~${\nabla_M}$. The curl and divergence on~${M}$, for example, are of the form~${\nabla_M \wedge F}$ and~${\nabla_M \rfloor F}$. Both are nilpotent operators:~${(\nabla_M\wedge)^2 = 0}$ (the GA form of~${d^2 = 0}$ from differential forms) and dually~${(\nabla_M\rfloor)^2 = 0}$ (codifferential~${\delta^2 = 0}$). This ensures that the Hodge-De Rham Laplacian~${\Delta_M = \nabla_M^2}$ is \emph{grade preserving}, as ${(d + \delta)^2 = d^2 + d\delta +\delta d + \delta^2 = d \delta + \delta d}$.
\item The projector Eq.~\ref{eq:projection} is easy to compute in an AD framework.
\end{itemize}
In short, differential operations on manifolds reduce to AD primitives via
Eq.~\ref{eq:projection} and the lemma above.

\begin{remark}
Compositions of~${P}$,~${\partial_M}$, and~${\nabla_M}$ give rise to the standard objects of differential geometry on~${M}$ (intrinsic differential operators, Laplacians, and curvature tensors), with no Christoffel symbols, coordinate charts, or dimension-specific formulas required. A parallel integration theory pairs~${\partial_M}$ with the directed volume field~${\mathrm{d}X}$ (the tangent space~${I(x)}$ weighted by the Riemannian measure~${\lvert\mathrm{d}X\rvert(x)\in\mathbb{R}}$) to give a single multivector-valued fundamental theorem subsuming the classical Stokes-type results~\citep{Hestenes_Sobczyk_1984}. Appendix~\ref{sec:api_manifolds} develops these in full, showing the full `API' for differential geometry.
\end{remark}

\section{Experiments}
\label{sec:experiments}
We present four sets of experiments. The first two concern manifold learning, using more standard differential geometric constructions. The second two go beyond differential geometry and show the application of our framework in physics and constrained optimization.
These experiments are end-to-end demonstrations, not performance benchmarks. They show how a small collection of AD transformations can be reused and composed.

\subsection{Explicit Manifold Learning} \label{sec:explicit_manifold}
We can learn an explicit representation of a manifold by starting with a known simple one~${M_0}$ (a sphere, a disk, a flat square) and applying some smooth transformation to get a new manifold $M_1$. This set of experiments work within this paradigm to solve various learning problems.

\begin{figure}[!h]
\centering
\begin{minipage}[t]{0.48\textwidth}
\centering
\includegraphics[width=\linewidth]{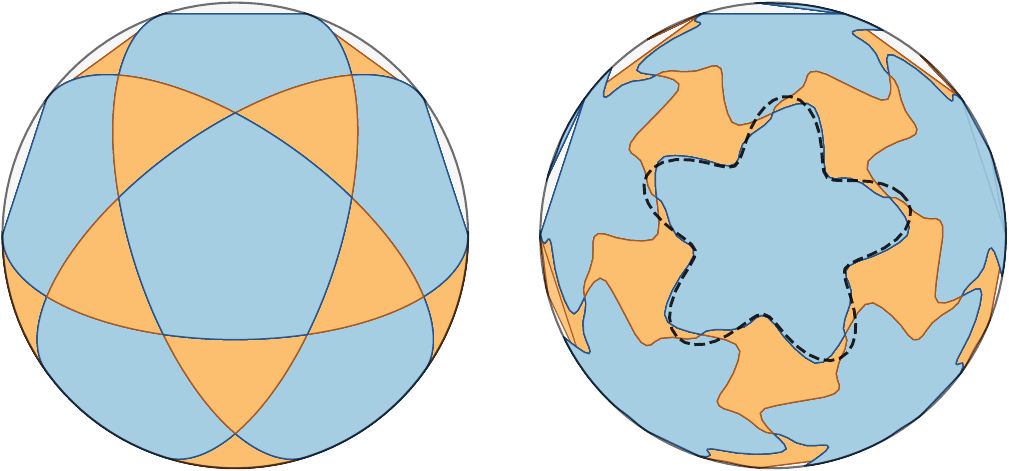}
\caption{Icosidodecahedron~${\to}$~${\mathbb{Z}_5}$ pinwheel on~${S^2}$. Target in black.}
\label{fig:exp2_tile_sphere_v2}
\end{minipage}\hfill
\begin{minipage}[t]{0.48\textwidth}
\centering
\includegraphics[width=\linewidth]{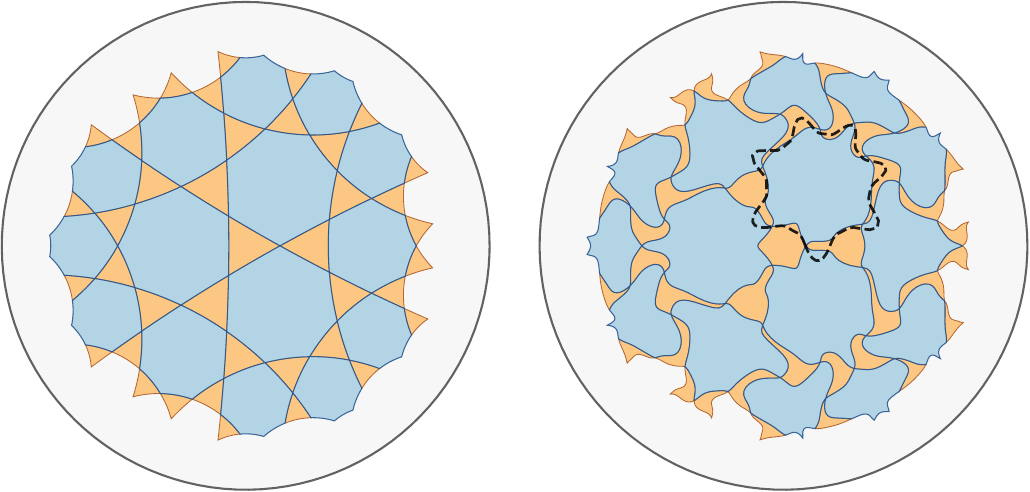}
\caption{Rectified~${\{3,7\}}$ tiling~${\to}$~${D_7}$ gear on $H^2$.}
\label{fig:exp2_tile_poincare_v2}
\end{minipage}
\end{figure}
\paragraph{Non-Euclidean tilings.} A general recipe for constructing a divergence-free vector field on a manifold is immediate in our framework. Start with an arbitrary 2-form, project to get a 2-field, and then apply~${\nabla_M\rfloor}$ to get a divergence-free vector field, recalling that~${(\nabla_M \rfloor)^2=0}$. This generalizes the Euclidean neural conservation law construction~\citep{richterpowell2022neuralconservationlawsdivergencefree}. In the particular case of a 2-surface, any 2-field must be a scalar multiple of~${I_M}$, so we parameterize our field via~${\phi_\theta I_M}$ for a scalar valued neural network~${\phi_\theta}$, with the resulting divergence-free field given by~${\nabla_M \rfloor (\phi_\theta I_M)}$. We can apply these divergence free flows to learn tilings on manifolds; this has proven useful in, e.g., the design of mechanical meta-materials ~\citep{mirramezani2024designingmechanicalmetamaterialslearning}. Our framework gives us a recipe to do this on any manifold: start with a fixed tiling, apply the flow, and then minimize a loss on the resulting tiles (ours is MSE against a target shape). On~${S^2}$ we deform an icosidodecahedron into a chiral,~${\mathbb{Z}_5}$-symmetric pinwheel target, see Fig.~\ref{fig:exp2_tile_sphere_v2}.

As a second example, on the Poincar\'e disk~${H^2}$ we deform a rectified~${\{3,7\}}$ tiling into a~${D_7}$-symmetric gear, see Fig.~\ref{fig:exp2_tile_poincare_v2}. With the ambient Minkowski space~${\mathbb{R}^{2,1}}$ spanned by~${\{e_1,e_2, e_n\}}$, our framework immediately grants us access to the differential operators on~${H^2}$ once we write down the manifold transformation~${\mathbb{R}^2\to H^2}$ given by $u\mapsto \cosh|u|\,e_n + (\sinh|u|/|u|)\,u$. The hyperbolic metric is induced by the ambient Minkowski inner product restricted to the tangent space of~${H^2}$. The divergence free flow here preserves area with respect to this metric, which explains the distortion near the boundary. This also makes it impossible for the learned shape to fully align with the target shape. For complete details of the setup and training, see Appendix~\ref{sec:tilings_details}.

\begin{wrapfigure}{r}{0.40\textwidth}
\centering
\includegraphics[width=0.38\textwidth]{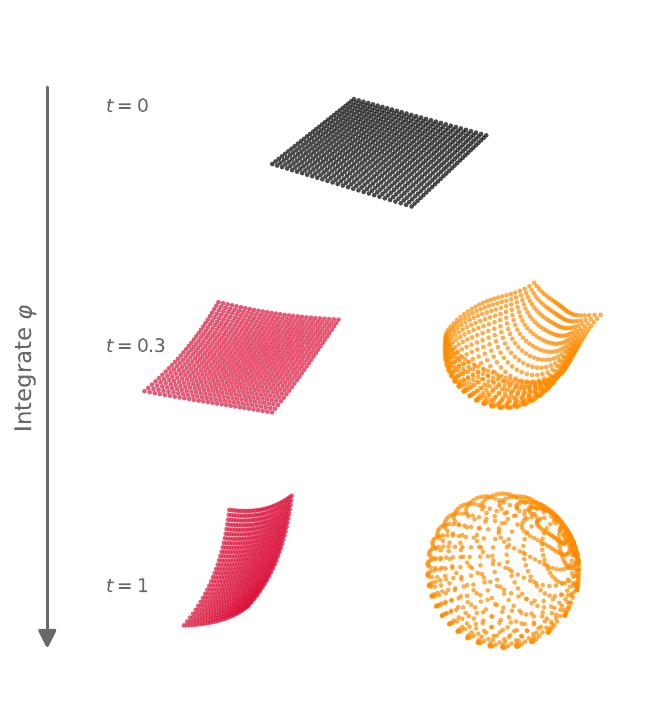}
\caption{Flat square (black) flowed under curvature loss (red) vs.\ with the added Gauss--Bonnet term (orange).}
\label{fig:exp2_euler_v2}
\end{wrapfigure}
\paragraph{Backprop through the Euler characteristic.} As a fun learning demonstration enabled by our framework, consider the problem of learning to warp a flat square into a sphere. We can parameterize this warping as the flow~${\varphi_t^\theta}$ of a neural ODE~\citep{chen2018neural} acting on our initial manifold~${M_0}$, and train it end to end so that the resulting manifold has certain geometric properties; see Appendix~\ref{sec:flow_matching_details} for details. One way to characterize the sphere is by its constant curvature. A remarkable fact is that nearly all curvature-based properties of a manifold can be computed as compositions of~${\partial_M}$ and~${I_M}$; Hestenes's so-called \emph{shape} operator~${S_M = \partial_M P}$ measures how the projector changes as we move along the manifold, which turns out to encode a lot of information about the local geometry. In particular, scalar curvature~${R}$ can be read off almost immediately for a manifold of any dimension, and equals twice the Gaussian curvature,~${R = 2K}$, on a surface. Training a flow to match the sphere's curvature results in learning a small section of the sphere (see Fig.~\ref{fig:exp2_euler_v2}). To warp the square into a full sphere, we can augment this training objective with a \emph{differentiable} estimate of the Euler characteristic~${\chi_M}$ of~${M}$ via the Gauss--Bonnet theorem for closed surfaces,
\begin{equation} \label{eq:gauss_bonnet}
    \chi_M \;=\; \frac{1}{4\pi}\int_M R \,\lvert\mathrm{d}X\rvert \;\approx\; \frac{1}{4\pi}\sum_{i=1}^N R(x_i)\,\lvert\mathrm{d}X\rvert(x_i).
\end{equation}
Again, this is not only easy to compute in our library, but differentiable, so we can \emph{directly optimize} Eq. \ref{eq:gauss_bonnet} to approach the Euler characteristic of the sphere~${\chi_{S^2} = 2}$. Of course, the square is not a closed manifold, and~${\chi}$ is invariant under diffeomorphisms (and hence the parameterized flow~${\varphi^\theta}$). Nevertheless, this added loss is still able to guide the flow to something that is approximately closed, from which the Euler characteristic estimate is approximately 2 (see Fig.~\ref{fig:exp2_euler_v2}).

\subsection{Implicit Manifold Learning} \label{sec:implicit_manifold}

Explicit manifold learning (\S\ref{sec:explicit_manifold}) requires a smooth invertible map~${\varphi}$ between a known base manifold and the target, which restricts the types of architectures one can use. We can also learn an \emph{implicit} representation, encoding the manifold as the support (or zero level set) of a learned field~\citep{park2019deepsdf, palmer2022deepcurrentslearningimplicitrepresentations}.

\begin{wrapfigure}{r}{0.38\textwidth}
\centering
\includegraphics[width=0.18\textwidth]{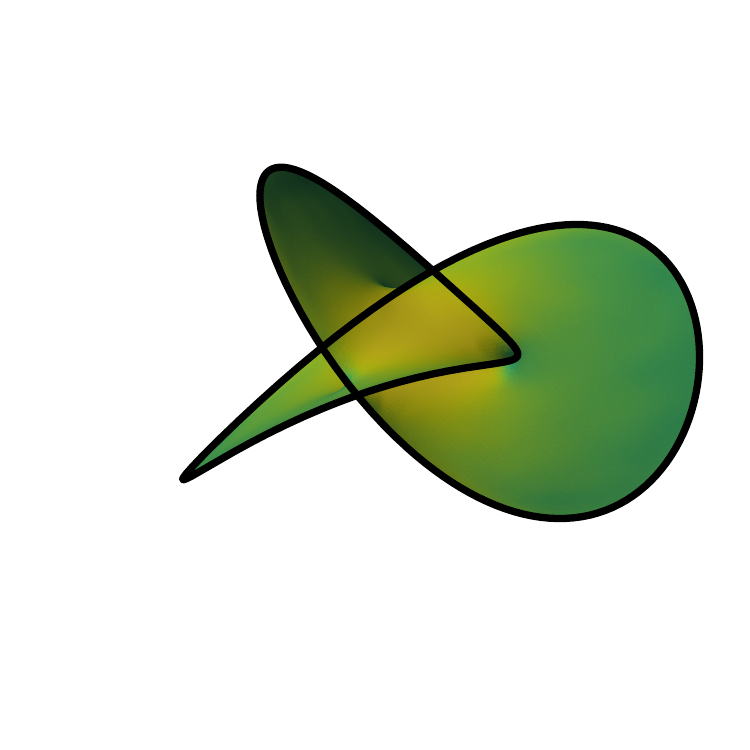}\hfill
\includegraphics[width=0.18\textwidth]{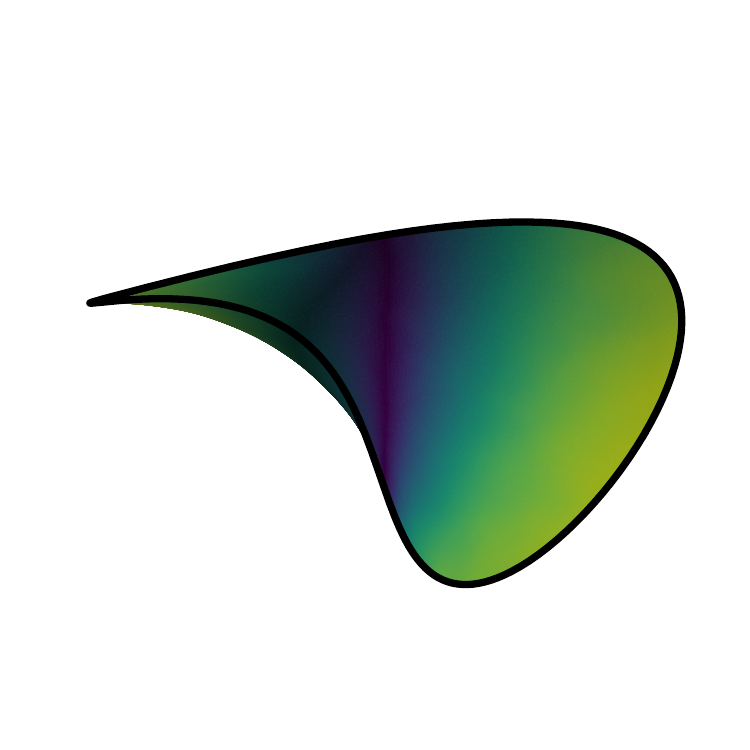}
\caption{Learned minimal surfaces for trefoil (left) and a saddle (right).}
\label{fig:exp3_minsurf_v2}
\end{wrapfigure}
\paragraph{Minimal surfaces and the Biot--Savart kernel.} Given a closed boundary ${\Gamma}$ of dimension ${n-k-1}$ in ${\mathbb{R}^n}$, we learn the bounding~${(n-k)}$-submanifold~${M_\theta}$ as the set on which a learned~${k}$-form ${\omega_\theta = \omega_\Gamma + \partial_{\mathcal{I}_1}\wedge \eta_\theta}$ concentrates (extracted from~${\omega_\theta}$ as a level set or ridge, Appendix~\ref{sec:minsurf_details}) for an arbitrary~${(k{-}1)}$-form~${\eta_\theta}$ parameterized by a neural network, refining the implicit-current representation of DeepCurrents~\citep{palmer2022deepcurrentslearningimplicitrepresentations}. The key observation is that the boundary constraint~${\partial M_\theta = \Gamma}$ can be made structural through the GA-native \emph{Biot--Savart kernel}:
\begin{equation}
\omega_\Gamma(x) = I^{-1}(x) \int_\Gamma \partial_{\mathcal{I}_1} G(x-y) \wedge d\Gamma(y) , \quad \partial_{\mathcal{I}_1} \wedge \omega_\Gamma = \delta_\Gamma, \label{eq:biotsavart}
\end{equation}

\noindent where~${G}$ is the fundamental solution to~${\Delta G = \delta}$, also known as the Green's function of the Laplacian.
With this choice, we guarantee that~${\partial_{\mathcal{I}_1} \wedge \omega_\theta = \delta_\Gamma}$, hence this parameterization satisfies~${\partial M_\theta = \Gamma}$ for any~${(n, k)}$ pair. DeepCurrents shows this for ${(n, k) = (3, 1)}$, which we show is easily replicated in our framework; see Fig.~\ref{fig:exp3_minsurf_v2}. The problem of generalizing the training to other~${(n, k)}$ is actually obstructed by the difficulty of computing the \textit{mass norm} of the learned current, as we discuss next. Intuitively, the mass norm is a generalization of the measure of the manifold, which is why minimizing this quantity gives surfaces of minimal measure. Next, we discuss how the mass norm is computed pointwise for~${k}$-forms; from this we can train the network by minimizing the integral of the mass norm over some ambient domain containing our surface.

\begin{figure}[!b]
\centering
\begin{minipage}[t]{0.48\textwidth}
\centering
\includegraphics[width=\linewidth]{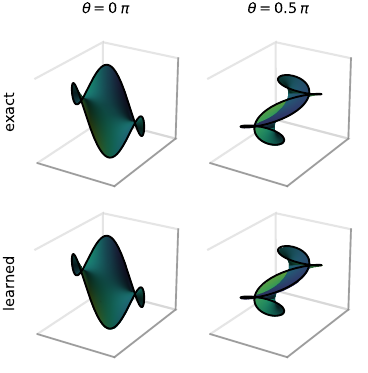}
\caption{A holomorphic graph over a disk in~${\mathbb{C}^2 \cong \mathbb{R}^4}$: exact minimal surface (top) vs.\ surface learned by minimizing the mass norm (bottom), at two projection angles.}
\label{fig:exp3_codim2_v2}
\end{minipage}\hfill
\begin{minipage}[t]{0.48\textwidth}
\centering
\includegraphics[width=\linewidth]{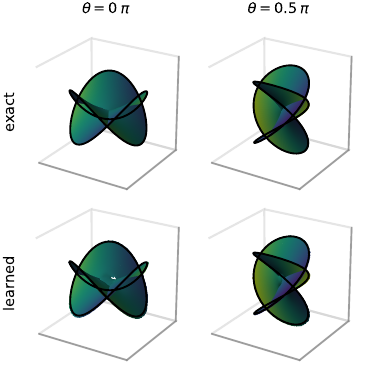}
\caption{A branched holomorphic disk bounding the~${(2,3)}$ torus knot, exact (top) vs.\ learned (bottom), as in Fig.~\ref{fig:exp3_codim2_v2}.}
\label{fig:exp3_codim2_torus_v2}
\end{minipage}
\end{figure}

The general way to define the (pointwise) mass norm of a~${k}$-form~${\omega}$ is to write~${\omega}$ as a sum of blades and then take the sum of the multivector norms of these blades. The true mass norm is the infimum of this quantity over all such decompositions. For a~${1}$-form~${\omega}$, all vectors are blades hence the mass norm coincides with the multivector norm. For a~${2}$-form~${\omega}$, the infimum is attained by writing~${\omega}$ as a sum of \textit{commuting} simple bivectors \citep{Hestenes_Sobczyk_1984}: ${\omega = \sum_{i=1}^{\lfloor n / 2 \rfloor} \lambda_i B_i}$ with~${B_iB_j=B_jB_i}$. The mass norm is then~${\lVert \omega \rVert =\sum_i \lvert \lambda_i \rvert}$. The essence of this norm is that it penalizes bivectors which are \textit{not} pure subspaces (blades).

Thus, a bivector like ${B = e_1\wedge e_2 + e_3\wedge e_4}$ is penalized for being a superposition of two pure subspaces (${e_1\wedge e_2}$ and~${e_3\wedge e_4}$) and has mass norm 2. In contrast, the bivector ${B' = e_1\wedge e_2 + e_3\wedge e_1=e_1\wedge(e_2 - e_3)}$ \textit{is} a pure subspace having mass norm~${\sqrt{2}}$. Under the multivector norm,~${B}$ and~${B'}$ would have the same norm. Minimizing the mass norm thus pushes~${\omega}$ to be simple almost everywhere. Importantly, the mass norm on bivectors can be computed efficiently and is differentiable almost everywhere.  Unfortunately, no closed form is known for the mass norm when~${3\leq k \leq n-3}$, so we only attempt the~${n=4,k=2}$ case. To get an interesting family of ground truth minimal surfaces for this codimension 2 setting, we can note that any holomorphic curve~${M}$ in~${\mathbb{C}^2}$ is mass-minimizing \citep{federer1969gmt}, hence defines a minimal surface for any boundary contained on it. We show these comparisons in Figs.~\ref{fig:exp3_codim2_v2} and~\ref{fig:exp3_codim2_torus_v2}. For the exact surfaces and their parameters, more examples, and detailed metrics, refer to Appendix~\ref{sec:minsurf_details}.

\subsection{Inverse Design in Spacetime} \label{sec:em}

\begin{wrapfigure}{l}{0.42\textwidth}
\centering
\includegraphics[width=0.40\textwidth]{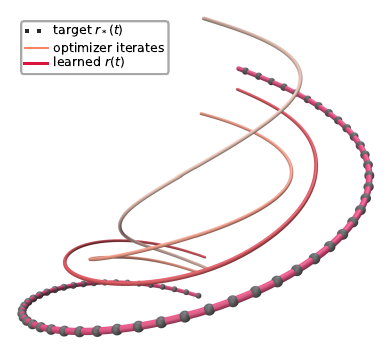}
\caption{Recovered trajectory~${r(t)}$ (red) against the target helix, with intermediate training trajectories also plotted.}
\label{fig:em_inv_v2}
\end{wrapfigure}
The spacetime algebra~${\text{Cl}(1,3)}$ is a useful model for electromagnetism ~\citep{hestenes1966spacetime, Doran_Lasenby_2003}. In this algebra, the electromagnetic field collapses into a single bivector~${F = E + IB}$ (electric~${E}$, magnetic~${B}$), the source 4-current is a single vector~${J}$, and the four classical Maxwell equations collapse to a single famous equation involving the multivector derivative:~${\partial_{\mathcal{I}_1} F = J}$. The grade decomposition splits this into the source half~${\partial_{\mathcal{I}_1}\rfloor F = J}$ and the homogeneous half~${\partial_{\mathcal{I}_1}\wedge F = 0}$ (no magnetic monopoles). Applying the multivector derivative once more gives charge conservation \emph{for free}:~${\partial_{\mathcal{I}_1}\rfloor J = 0}$. The 4-potential factorization~${F = \partial_{\mathcal{I}_1}\wedge A}$ in turn exists because~${(\partial_{\mathcal{I}_1}\wedge)^2 = 0}$, and \emph{gauge freedom} is the statement that~${A}$ and~${A + \partial_{\mathcal{I}_1} \wedge\phi}$ produce the same~${F}$ for any scalar~${\phi}$. Picking a gauge (e.g.~Lorenz,~${\partial_{\mathcal{I}_1}\rfloor A = 0}$) just selects a representative (usually~${\phi = 0}$).

\begin{wrapfigure}[18]{r}{0.32\textwidth}
\centering
\includegraphics[width=0.30\textwidth]{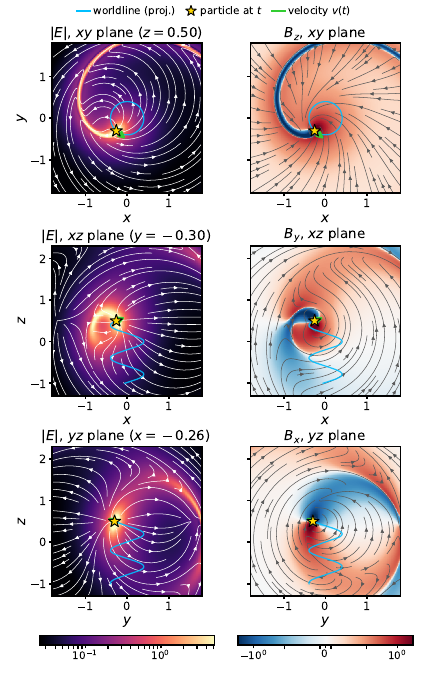}
\caption{Projections of ${F}$ field at~${t{=}2}$.}
\label{fig:em_fwd_v2}
\end{wrapfigure}
To demonstrate our framework in an inverse design problem, consider the problem of recovering a charged particle's trajectory~${r(t)}$ from a sparse set of Faraday bivector observations~${F(y_i)}$ at spacetime points~${y_i}$.

To parameterize the particle's worldline, we use a small MLP which outputs a 3-velocity~${v_\theta(t)}$ in the lab frame such that~${|v_\theta(t)| < 1}$ (so the predicted worldline is timelike and future-directed at every step). Integrating~${v_\theta(t)}$ gives the spatial trajectory~${r_\theta(t)}$, and hence the worldline~${x_\theta(t) = t\,e_0 + r_\theta(t)}$. To get the Faraday bivector~${F}$ at an observation point~${y}$, we first must find the retarded time~${\tau_r(y)}$ such that~${y-x_\theta(\tau_r(y))}$ is null, i.e., ${(y-x_\theta(\tau_r(y)))^2 = 0}$. Here~${\tau_r(y)}$ is the lab time at which the particle emits the signal received at~${y}$. We compute~${\tau_r(y)}$ via bisection search, which can be differentiated (with respect to~${r}$) via implicit function theorem. Once we have~${\tau_r(y)}$, we can compute the Li\'enard--Wiechert potential~${A(y)}$ and then the Faraday bivector ${F(y) = \partial_{\mathcal{I}_1}\wedge A(y)}$. The entire pipeline from~${v_\theta(t)}$ to~${F(y)}$ is differentiable, allowing us to backpropagate the relative MSE between predicted and observed Faraday bivectors to update the parameters of the MLP. The~${F}$ field at a point is shown in Fig.~\ref{fig:em_fwd_v2} for a particle on a helical worldline, and the learned trajectory is shown in Fig.~\ref{fig:em_inv_v2}. Full details of the experiment, including the observation grid, input encoding, and optimizer schedule, are in Appendix~\ref{sec:em_details}.

\subsection{Optimization on Rotor Groups}
\label{sec:rotoropt}
Until now, all our examples have used the \emph{vector} derivative $\partial_{\mathcal{I}_1}$; we now show the utility of a different instance of the multivector derivative for optimization on rotor groups and the homogenous spaces they act on. While this is a well studied topic~\citep{absil2008optimization, boumal2023intro, sola2021micro, deray2020manif}, a GA-native medium allows for these spaces to be specified as data instead of code, making optimization procedures reusable and composable.
\begin{figure}[!b]
\centering
\includegraphics[width=\textwidth]{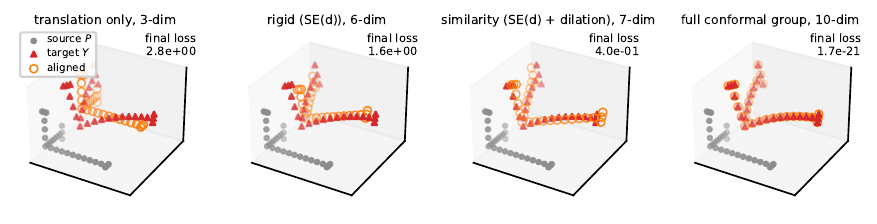}
\caption{Vectorized optimization over different subgroups: Given an object (gray) and a target conformal transformation (red), we simultanously learn 4 rotors in \textit{different} rotor groups corresponding to maskings of the 10-dimensional bivector space of~${\mathrm{CGA}(3)}$: translation (3D), rigid (6D), similarity (7D), and conformal (10D).}
\label{fig:lie_subalgebras}
\end{figure}

The \emph{rotors} of a geometric algebra are even products of unit vectors $R=\prod_{i=1}^{2k} u_i$ satisfying $R\tilde{R} = 1$. It is clear that rotors form a group under the geometric product called the rotor group $\mathrm{Spin}(p,q)^+\subset \mathrm{Cl}(p,q)$. Homogeneous spaces like the Stiefel manifold or the Grassmannian can be realized as $R X_0 \tilde{R}$ for some fixed multivector $X_0$. Thus, optimizing a function over one of these spaces generally reduces to the problem of minimizing a function $f: G\to \mathbb{R}$ for $G$ some subgroup of $\mathrm{Spin}(p,q)^+$.

To this end, we first need to specify $G$. One way to specify a group is via a set of generators. However, because $G$ is a \emph{Lie} group, we can instead provide generators of the corresponding Lie algebra $\mathfrak{g} := T_e G$. The utility of GA is that this Lie algebra will always be a \emph{bivector} subspace, leading to a compact and uniform way to capture all matrix Lie groups (subgroups of $\mathrm{Spin}(p,q)^+$). With this choice of bivectors $\{B_k\}$, suppose that we currently have a rotor $g_t \in G$ and we would like to find an update $U_t$ such that $f(g_t U_t) < f(g_t)$. This can be done with the \emph{left trivialized differential} $df_e (g_t)\in\mathfrak{g}^\ast$, which can be implemented in terms of the multivector derivative as:
\begin{equation}\label{eq:lie_deriv}
  df_e(g)= \partial_{\mathcal{I}_{\mathrm{even}}} f(g)\,g \in \mathfrak{g}^*,
\end{equation}
Intuitively, we can think of the multivector derivative as an element of $T^\ast_g G$, which we then right multiply by $g$ to get to the appropriate cotangent space at the identity, $\mathfrak{g}^*$. To turn this covector into an update direction we need a map $\sharp:\mathfrak{g}^*\to\mathfrak{g}$. For descent to work, the inner product inducing this map \emph{must} be positive definite, meaning the usual multivector inner product won't work. While AD implicitly uses the Euclidean inner product on coordinates, our framework lets any positive-definite choice be made explicitly. Once we have mapped our element to $\mathfrak{g}$, exponentiating gives:
\begin{equation}
    g_{t+1} = g_t \exp(-\eta\, df_e(g_t)^\sharp)
\end{equation}
This update is valid for any Lie group embedded in $\textrm{Cl}(p,q)$. Existing tools like \texttt{manopt}~\citep{boumal2014manopt} and \texttt{geoopt}~\citep{kochurov2020geoopt} implement separate code for each manifold being optimized over, and additionally only work out of the box for Lie groups which are compact or the semidirect product of a compact Lie group with $\mathbb{R}^n$ (e.g., $SE(n)=SO(n)\ltimes \mathbb{R}^n$). \texttt{geotorch}~\citep{lezcano2019trivializations} implements a similar trivialization procedure to ours, but even this library still assumes definite signature. See Appendix~\ref{sec:rotor_details} for further exposition.

With `groups as data', we can use JAX's vectorization to simultaneously optimize an objective across \emph{different} groups. Consider the chain $T(3)\subset SE(3)\subset \text{Sim}(3)\subset\text{Conf}(3)=\textrm{Spin}(4,1)^+$, which are all rotor subgroups of the conformal geometric algebra $\text{CGA}(3)=\text{Cl}(4,1)$: each Lie algebra in the chain is a coordinate subspace of the 10-dimensional bivector algebra of the full conformal group. In Figure~\ref{fig:lie_subalgebras}, we optimize the same point-cloud alignment loss on all four groups \emph{at once}: each panel converges to the best approximation its subgroup admits (matching the centroid, Procrustes, and Umeyama closed forms), and the conformal panel recovers the target configuration exactly.

Next, we show how this same framework can support optimization over the Lorentz group of the spacetime algebra. Suppose that we are on a ship moving at some uniform \textit{unknown} velocity and orientation, from which we can measure the (possibly noisy) electromagnetic field $\tilde{F}$. We can then try to recover the Lorentz transformation $R\in \mathrm{Spin}(1,3)^+$ that maps the lab frame to the ship frame by minimizing the loss between the mapped analytic field $R F_k \widetilde{R}$ and the measured field $\tilde{F}_k$. We plot the results of doing such an optimization in Figure~\ref{fig:lorentz_recovery}. The optimization converges to the correct Lorentz transformation, and we can recover the ship's velocity from the optimized rotor $R$. See Appendix~\ref{sec:rotor_details} for full training details.

\begin{figure}[t]
  \centering
  \includegraphics[width=0.8\textwidth]{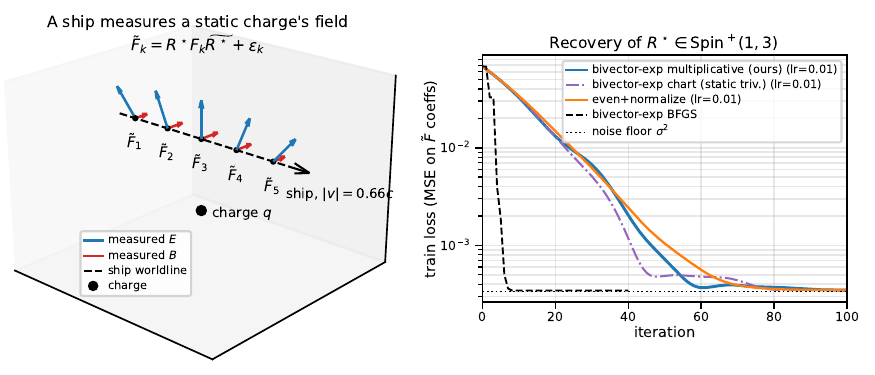}
  \caption{Recovering a Lorentz rotor from field measurements. \emph{Left:} a ship crosses the lab frame at uniform velocity (dashed), past a static point charge, measuring the Faraday bivector $\tilde{F_k}$ in its own frame at events along its path, ${\tilde{F}_k = R^\star F_k \widetilde{R^\star} + \epsilon_k}$. In the ship frame the pure-electric Coulomb field tilts and compresses (electric part, blue) and acquires a circulating magnetic part (red). \emph{Right:} training loss for recovering~${R^\star}$.}\label{fig:lorentz_recovery}
\end{figure}

\section{Discussion and Conclusion}

The experiments illustrate that a small set of geometric operators can support several apparently different workflows. The same implementation of the multivector derivative, exterior derivative, and tangent projection solves problems ranging from manifold learning, inverse problems in spacetime, and optimization on rotor groups. In each case, the geometric structure enters through the selected index set~${\mathcal I}$, the multivector field being differentiated, and, on manifolds, the tangent pseudoscalar~${I_M}$. This is the main practical benefit of the approach: geometric differential operators can be exposed as reusable AD transformations rather than re-derived for each coordinate system or application.

\subsection*{AI use statement}

In this work, we used generative AI tools for three primary tasks: first, to collaborate and iterate on experiments. While early versions of all of the experiments were entirely human written, we found AI to increase our ability to iterate fast and find the right training settings to make things work. Second, we used AI to find relevant sources in the literature. We have carefully reviewed all of these sources to ensure that no citations are hallucinated. Third, we used generative AI to help revise the paper, improve the precision of mathematical statements, and for formatting purposes. The authors take full responsibility for all claims made in the paper.

\bibliographystyle{plainnat}
\bibliography{references}

\newpage
\appendix

\section{Proofs}
\label{sec:proofs}

Throughout this section, we let $V$ be an $n$-dimensional real vector space with a nondegenerate symmetric bilinear form of signature $(p, q)$, $\Lambda(\mathbb{R}^{p,q}) = \mathrm{Cl}(p,q)$ its geometric algebra, and $\{b_J\}_{J\subset [n]}$ a basis of blades induced by a basis $b_1, \ldots, b_n$ of $V$. Given the multivector inner product $\langle \cdot, \cdot \rangle$, we can compute the reciprocal basis blades by first computing the gram matrix $\mathbf{G}_{JK}=\langle b_J, b_K \rangle$, and then setting $b^J = \sum_K \mathbf{G}^{-1}_{JK} b_K$. Note that that is how we implement the musical isomorphisms $\sharp, \flat$ that we use to define our metric aware \texttt{vjp}. As a final remark, we mention that even for an orthonormal basis in a Euclidean GA $\Lambda(\mathbb{R}^n)$, while the Gram matrix on vectors on vectors might be the identity, the full gram matrix $\mathbf{G}$ will not: consider, for example, the basis vector $e_{12}=e_1\wedge e_2$. Then, $e^{12} = -e_{12}$. Nondegeneracy of $\langle \cdot, \cdot \rangle$ ensures that $\mathbf{G}$ is invertible.

Another frequent fact that we will use throughout is that we can write a multivector as coordinates in some basis. For any multivector $A$, we have:
\begin{equation} \label{eq:dual_expansions}
  A \;=\; \msum_J \langle b^J, A\rangle\, b_J \;=\; \msum_J \langle A, b_J\rangle\, b^J
  \qquad\text{for all }A\in\Lambda(V).
\end{equation}
In general, this is how we can express a multivector $A$ in coordinates relative to a basis; we will use this expansion throughout.

\subsection{Basis Independence of \texorpdfstring{$\partial_{\mathcal{I}}$}{the Multivector Derivative}}
\label{sec:basis-indep}

\begin{proposition} \label{prop:basis_indep}
Let $\mathcal I\subset 2^{[n]}$ and take $W:=\operatorname{span}\{b_J\}_{J\in\mathcal I}=\operatorname{span}\{b^J\}_{J\in\mathcal I}$. The operator
$\partial_{\mathcal I}F=\sum_{J\in\mathcal I} b^J\,\partial F/\partial X_J$ depends only on the subspace $W$ and the pairing, not on the basis $\{b_J\}_{J\in\mathcal I}$ chosen for $W$.
\end{proposition}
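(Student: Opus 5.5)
The plan is to rewrite $\partial_{\mathcal I}F$ in a form that never mentions a basis, and then to check that two arbitrary bases of $W$ produce the same result through a change-of-basis computation.

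\textbf{Step 1: chain rule.} Fix $X\in W$ and write $X=\sum_{J\in\mathcal I}X_Jb_J$. The map from coordinates to $X$ is linear, so the chain rule gives
\[
\partial F/\partial X_J=\left.\frac{d}{dt}F(X+tb_J)\right|_{t=0}=dF_X[b_J],
\]
which is the first identity of Lemma~\ref{lem:mv-jvp}. Hence
\[
\partial_{\mathcal I}F(X)=\sum_{J\in\mathcal I}b^J\,dF_X[b_J].
\]
Here $dF_X$ is a basis-free linear map $W\to\Lambda(V)$, and the reciprocal basis is taken inside $W$ with respect to the restricted pairing, which is nondegenerate on $W$.

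\textbf{Step 2: the tensor $\sum_J b^J\otimes b_J$ is basis independent.} Let $\{c_K\}_{K\in\mathcal I}$ be another basis of $W$, related by $c_K=\sum_J A_{JK}b_J$ with $A$ invertible. Let $G$ and $G'$ be the Gram matrices of the two bases. Then $G'=A^{\top}GA$, and the reciprocal bases are related by $c^K=\sum_J (A^{-1})_{KJ}b^J$.

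To verify this, I will use the defining property $\langle c^K,c_L\rangle=\delta_{KL}$ together with $\langle b^J,b_M\rangle=\delta_{JM}$. An alternative route is to use the expansion \eqref{eq:dual_expansions} restricted to $W$.

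Bilinearity of the geometric product then gives the key identity
\[
\sum_K c^K\,dF_X[c_K]=\sum_{K,J,M}(A^{-1})_{KJ}A_{MK}\,b^J\,dF_X[b_M]=\sum_J b^J\,dF_X[b_J],
\]
since $\sum_K A_{MK}(A^{-1})_{KJ}=\delta_{MJ}$.

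A cleaner way to say the same thing: the element $\sum_J b^J\otimes b_J\in W\otimes W$ is the identity of $W$ transported through the isomorphism $W\cong W^*$ induced by the pairing. The derivative is the image of this element under the bilinear map $(u,v)\mapsto u\,dF_X[v]$. Since the element itself does not depend on the basis, neither does the derivative.

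\textbf{Step 3: span equality.} The statement also asserts $\operatorname{span}\{b_J\}=\operatorname{span}\{b^J\}$. This needs a remark: $b^J$ must be the reciprocal basis computed within $W$, namely $b^J=\sum_{K\in\mathcal I}(G_W^{-1})_{JK}b_K$, where $G_W$ is the Gram matrix restricted to $\mathcal I$. With this definition the two spans coincide because $G_W$ is invertible, and that invertibility is exactly nondegeneracy of the pairing on $W$. If one instead used the full-algebra reciprocals, they need not lie in $W$. I will therefore state explicitly that the definition in $W$ is the intended one, matching the hypothesis in the main text.

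\textbf{Main obstacle.} The only delicate point is the index bookkeeping in the reciprocal-basis transformation. One must get $A^{-1}$ rather than $A^{-\top}$, and the order of the geometric product must be preserved, since that product is noncommutative. The invariant-tensor formulation avoids both issues, so I would lead with it and relegate the matrix computation to a verification.
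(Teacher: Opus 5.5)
Your proof is correct, and its computational core (Step 2) is the paper's own argument. The paper writes $c_J=\sum_K A_{JK}b_K$, which is the transpose of your convention; that is why it obtains $D=(A^\top)^{-1}$ where you obtain $A^{-1}$, and both are right. It derives the reciprocal transformation from $\langle c^J,c_L\rangle=\delta_{JL}$. It then cancels $A$ against its inverse via the coordinate chain rule $\partial X_K/\partial X'_J=A_{JK}$, which is the same fact as your linearity of $dF_X$ in the direction.

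The genuinely different ingredient is the invariant formulation you propose to lead with. Consider $W\otimes W\to\mathrm{End}(W)$, $u\otimes v\mapsto\langle u,\cdot\rangle\,v$, which is an isomorphism precisely because the pairing is nondegenerate on $W$. Under it, the element $\sum_J b^J\otimes b_J$ maps to $\mathrm{id}_W$ by the dual expansion Eq.~\ref{eq:dual_expansions} restricted to $W$, so the element is basis-free. Then $\partial_{\mathcal I}F(X)$ is its image under the linear map induced by $(u,v)\mapsto u\,dF_X[v]$.

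This gives a proof with no index bookkeeping, in which the order of the noncommutative product is fixed once by the bilinear map. The paper's explicit matrix computation is more pedestrian, but it mirrors how reciprocals are actually computed in the implementation (Gram-matrix inversion).

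On Step 3: the paper does not derive the span equality but assumes it as a hypothesis. Its $b^J$ come from the full Gram matrix of $\Lambda(V)$, and its subsequent remark shows the equality can fail there (e.g.\ the reciprocal of $e_\infty$ is $-e_o$). Your in-$W$ reading loses nothing, but you should say why. If the full-algebra reciprocals $b^J$, $J\in\mathcal I$, lie in $W$, they satisfy $\langle b^J,b_K\rangle=\delta_{JK}$ for all $K\in\mathcal I$. By nondegeneracy of the pairing on $W$ they therefore coincide with your in-$W$ reciprocals, so your proof covers the statement exactly as the paper poses it.
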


\begin{proof}
Suppose that we had a different basis $\{c_J\}_{J\in \mathcal I}$ of $W$. By virtue of $\{b_J\}$ being a basis for $W$, every $c_J$ must be expressible as $c_J = \sum_{K} \mathbf{A}_{JK} b_K$ for some matrix $\mathbf{A}$. The reciprocals $\{b^J\}$ also form a basis, and thus should satisfy $c^J = \sum_{K} \mathbf{D}_{JK} b^K$ for some matrix $\mathbf{D}$. We can then note:
\begin{align}
    \delta_{JL} &= \langle c^J, c_L \rangle = \sum_{K,M} \mathbf{D}_{JK}\mathbf{A}_{LM} \langle b^k, b_M\rangle \\
    &= \sum_K \mathbf{D}_{JK} \mathbf{A}_{LK} = (\mathbf{DA}^\top)_{JL}
\end{align}
Therefore, $\mathbf{D} = (\mathbf{A}^\top)^{-1}$.

Next, note that for an arbitrary multivector $X\in W$, we have:
\begin{align}
    X &= \sum_J X'_J c_J \\
    &= \sum_J X'_J \left( \sum_K A_{JK} b_K \right) \\
    &= \sum_K \left(\sum_J A_{JK} X'_J \right) b_K \\
    &= \sum_K X_K b_K
\end{align}
This relates the coordinates $X_K$ and $X'_J$, and immediately implies:
\begin{align}
\frac{\partial X_K}{\partial X'_J} = A_{JK}
\end{align}
With this established, we can easily simplify the following multivector derivative expressed in the $\{c_J\}$ basis as one expressed in the $\{b_J\}$ basis:
\begin{align}
    \partial F &= \sum_J c^J \frac{\partial F}{\partial X'_J} \\
    &= \sum_J \left[\sum_{K} b^K (A^\top)^{-1}_{JK} \right] \left[ \sum_L \frac{\partial F}{\partial X_L} \frac{\partial X_L}{\partial X'_J}\right] \\
    &= \sum_{K,L} b^K \left[\sum_J (A^\top)^{-1}_{JK} A_{JL} \right] \frac{\partial F}{\partial X_L} \\
    &= \sum_{K,L} b^K \delta_{KL} \frac{\partial F}{\partial X_L} \\
    &= \sum_K b^K \frac{\partial F}{\partial X_K}
\end{align}
\end{proof}

\begin{remark}
The condition that the $\{b^J\}$ span the same subspace as $\{b_J\}$ can be relaxed, but convolutes the proof slightly. As an example of such an instance, consider the conformal euclidean algebra $\Lambda(\mathbb{R}^{n+1,1})$ with basis $e_1, \ldots, e_n, e_+, e_-$, and then consider the change of basis $e_\infty = e_+ + e_-$ and $e_o = \frac12 (e_- - e_+)$. It is easy to verify that $\langle e_\infty, e_o \rangle = -1$, meaning that the reciprocal of $e_\infty$ is $-e_o$. We can thus see that any space $W$ containing $e_\infty$ components without the $e_o$ components will have a reciprocal space that is not contained within it.
\end{remark}

\subsection{Proof of Lemma~\ref{lem:mv-jvp}}
\label{lemma-proof}

\begin{proof}[Forward-mode proof]
Take $X = \sum_{J\in\mathcal I} X_J b_J \in W$. Then by definition,
\begin{equation}
\frac{\partial F}{\partial X_J} = \frac{d}{dt} F(X + t b_J) \big|_{t=0} = dF_X[b_J] = \texttt{jvp}(F, X, b_J)
\end{equation}
so $\partial_\mathcal{I} F(X) = \sum_{J\in\mathcal{I}} b^J \texttt{jvp}(F, X, b_J)$.
\end{proof}

\begin{proof}[Reverse-mode proof]
Letting $\mathcal{O}$ index the output blades, so that $F(X) \in\operatorname{span}\{b_K: K\in \mathcal{O}\}$ for all $X$. Take $\mathcal{I}$ to be the corresponding input index set over which $X$ varies. Note that the adjoint is defined implicitly by the property $\langle C, dF_X[A] \rangle= \langle dF^\ast_X[C], A\rangle$ for the multivector inner product $\langle \cdot, \cdot \rangle$:
\begin{align}
    \partial_\mathcal{I} F &= \sum_{J\in\mathcal{I}} b^J dF_X[b_J] \\
    &= \sum_{J\in\mathcal{I}} b^J \left[ \sum_{K\in\mathcal{O}} \langle b^K, dF_X[b_J]\rangle b_K\right] \\
    &= \sum_{J\in\mathcal{I}} b^J \left[ \sum_{K\in\mathcal{O}} \langle dF_X^\ast[b^K], b_J\rangle b_K\right] \\
    &= \sum_{K\in\mathcal{O}} \left[ \sum_{J\in\mathcal{I}} \langle dF_X^\ast[b^K], b_J\rangle b^J \right] b_K \\
    &= \sum_{K\in\mathcal{O}} dF_X^\ast[b^K] b_K
\end{align}
and we are done as our metric aware $\texttt{vjp}_{\langle\cdot, \cdot \rangle}$ computes $dF^*$ with respect to the multivector inner product instead of the Euclidean one.
\end{proof}

\begin{remark}
If we assume that $F$ varies over the basis index set $\mathcal{O}$, and takes inputs $X$ which vary over the basis index set $\mathcal{I}$, then computing the forward mode $\partial_{\mathcal{I}}$ requires $\lvert \mathcal{I} \rvert$ \texttt{jvp} calls and geometric products. Computing the reverse mode $\partial_\mathcal{O}$ requires $\lvert \mathcal{O} \rvert$ \texttt{vjp} calls and geometric products. This implies that for computing multivector derivatives of scalar functions is most efficient in reverse mode, while most vector derivatives are more efficient to compute with the forward mode form since they only require $n$ \texttt{jvp}'s.
\end{remark}

\section{Software API and Implementation}
\label{sec:api}

Our implementation is written in JAX \citep{jax2018github} to be natively differentiable, and begins with the basic means of constructing Geometric Algebras and the operators on them. This base functionality is similar to that of other libraries such as \texttt{jaxga}, \texttt{clifford}, \texttt{numga}, and \texttt{galgebra}. However, once we get into the more calculus heavy operations, the autodiff capability of JAX makes it possible to go further than what these libraries provide.

\subsection{Algebras and Multivectors}
\label{sec:api_algebra}

In this work, we only consider geometric algebras over real vector spaces (though in principle we could support complex vector spaces as this is natively supported in JAX). Such an algebra is specified by nonnegative integers $(p,q,r)$, with $p$ denoting the number of positive basis vectors ($e^2=1$), $q$ denoting the number of negative basis vectors ($e^2=-1$), and $r$ denoting the number of zero basis vectors ($e^2=0$). We write this geometric algebra as $\Lambda(\mathbb{R}^{p,q,r})=\mathrm{Cl}(p,q,r)$. Because many of the autodiff primitives require a nondegenerate metric, we will oftentimes assume that $r=0$ to satisfy this condition. Below, we give an example where we first create the spacetime algebra $\Lambda(\mathbb{R}^{1,3})$, and then create some multivectors and perform some basic operations on them:
\begin{lstlisting}
from ga_core.ga import Algebra
st = Algebra(1, 3, 0, ['e0', 'e1', 'e2', 'e3'])  # Cl(1,3): e0^2 = +1, e_i^2 = -1
a = -st.e0 + st.e1 + 2*st.e2  
b = st.e2 - st.e3
a*b            # -2 - e02 + e03 + e12 - e13 - 2 e23   (geometric product)
(a*b).grade(0) # -2                                    (<ab>_0, the inner product)
a ^ b          # -e02 + e03 + e12 - e13 - 2 e23        (outer product)
(a ^ b) * st.I # 2 e01 - e02 - e03 + e12 + e13         (dual w.r.t. I = e0123)
~(a ^ b)       # e02 - e03 - e12 + e13 + 2 e23         (reverse)
\end{lstlisting}
The \texttt{Algebra} class stores all the operators on $\Lambda(V)$ as sparse tensors. Thus, the above operations are all implemented as sparse tensor contractions, and thus run efficiently on GPU and are natively differentiable.

Just as JAX provides mapping utilities to make extending a function to a batch of inputs easy, we write our \texttt{MVArray} as a JAX pytree so that it natively pairs with vmap:
\begin{lstlisting}
def f(x):
    return x*x

xs = st.array([
    1 + st.e01,          # scalar + 2-vector
    st.e0 + st.e12,      # 1-vector + 2-vector
    st.e01 + st.e23,     # non-simple 2-vector (not a blade)
])
xs.shape                 # (3,)
f(xs[0])                 # 2 + 2 e01
jax.vmap(f)(xs)          # [2 + 2 e01, 2 e012, 2 e0123]
\end{lstlisting}

We also provide general helper functions for the constructions used in the experiments. Given \texttt{Algebra}(p,q,0), we can conformalize it to \texttt{Algebra}(p+1,q+1,0). For example, for the experiments in Section~\ref{sec:rotoropt}, we use conformalized $\Lambda(\mathbb{R}^n)$, $\Lambda(\mathbb{R}^{n+1,1})$. This class provides utilities like conformal embedding for points, and ways of creating various translation and rotation rotors. (Note that translations cannot be expressed as rotors in the base algebra $\Lambda(\mathbb{R}^n)$, only rotations can). Other geometric primitives are also easily accessible in this class: for example spheres in the conformal euclidean case and lightcones in conformal spacetime. For further exposition, see \citet{Doran_Lasenby_2003, dorst2007geometric}.

For all \texttt{Algebra} classes, we also provide utilities for easy manipulation of rotors, general exponential and log maps (not closed form), Cayley transforms, along with the trivialization toolbox to easily map (co)tangents between tangent spaces.

\subsection{The Multivector Derivative as a Function Transformation}
\label{sec:api_derivative}

Lemma~\ref{lem:mv-jvp} gives us a recipe to implement the multivector derivative in a few lines of code within our library. For a function which takes in a single multivector and outputs a single multivector, the general multivector derivative can be implemented as:
\begin{lstlisting}
def mv_deriv_fwd(f, input_blade_idxs=None):
    """ partial_I f(X) = sum_{J in I} b^J * jvp(f, X, b_J) """
    def df(X):
        alg = X.algebra
        idxs = input_blade_idxs if input_blade_idxs is 
                    not None else arange(2 ** alg.n) 
        b_lo = alg.basis_blades[idxs]         # MVArray [b_J]
        b_up = alg.recip_basis_blades[idxes]  # MVArray [b^J]
        jvp  = lambda A: jax.jvp(f, (X,), (A,))[1]   # dF_X[A]
        # Note below that '*' is a broadcasted geometric product!
        return (b_up * jax.vmap(jvp)(b_lo)).sum() 
    return df

def mv_deriv_rev(f, output_blade_idxs=None):
    """ partial_O f(X) = sum_{K in O} vjp_<,>(f, X, b^K) * b_K """
    def df(X):
        alg  = X.alg
        idxes = input_blade_idxs if input_blade_idxs is 
                    not None else arange(2 ** alg.n)
        b_lo = alg.basis_blades[idxs]         # MVArray [b_J]
        b_up = alg.recip_basis_blades[idxes]  # MVArray [b^J]
        _, f_vjp = jax.vjp(f, X)
        # metric-aware vjp: vjp(f, X, C^flat)^sharp = dF_X^*[C]
        adj = lambda C: alg.sharp(f_vjp(alg.flat(C))[0])
        # Note below that '*' is a broadcasted geometric product!
        return (jax.vmap(adj)(b_up) * b_lo).sum()
    return df
\end{lstlisting}
We provide two methods, \texttt{mv\_deriv\_fwd} and \texttt{mv\_deriv\_rev} which generalize this to functions which may have arbitrary pytrees of multivectors as inputs and ouptuts. Both methods are extensionally equal in the value they compute, but do so via forward and reverse mode AD, respectively. As mentioned in the text, you can choose which one to use depending on the context of the derivatives you are computing.

\subsection{Vector Manifolds and their Transformations}
\label{sec:api_manifolds}

Much of this work revolves around doing calculations on \texttt{VectorManifold} objects. This enables us to easily compute differential operators on the manifold, and with some mild additional assumptions, a way to also perform Monte Carlo integration on these manifolds. Below, we give the API for this class to demonstrate the capabilities of this class, where we have simplified what is written down from the actual code for the purposes of pedagogy. We emphasize the relative simplicity of the library juxtaposed with the rich, varied set of applications enabled by this framework. The derivative functions simply call \texttt{mv\_deriv\_fwd} with different arguments, and everything else are mere compositions of the projection operator \texttt{proj} and these derivatives.

We stress that geometric quantities like curvature information about the manifold come directly downstream from the shape operator $S(A) = \dot{\partial}_M \dot{P}(A)$. The $\cdot$'s indicate what is being differentiated (in our library we achieve this by applying a \texttt{stop\_grad} to $A$). The curvature tensors are built from its grade-raising part, the \emph{curl tensor} $S_a := \dot\partial_M\wedge\dot P(a)$. For tangent vectors $a$ and $b$, the Riemann curvature is a bivector-valued function of the tangent bivector $a\wedge b$, from which one can compute the Ricci tensor and the scalar curvature:
\begin{equation}
\begin{gathered}
  R(a\wedge b) = \sigma\, P\big(S_a\times S_b\big), \qquad
  \mathrm{Ric}(a) = \sigma\, \partial_M\rfloor S_a, \\
  R = \partial_a\cdot \mathrm{Ric}(a) = \msum_i \langle e^i, \mathrm{Ric}(e_i)\rangle,
\end{gathered}
\end{equation}
where $A\times B =\frac12(AB - BA)$ is the commutator, $\{e_i\}$ is any tangent frame, and $\sigma=+1$ for a definite ambient metric and $-1$ for an indefinite one \citep[Eqs.~5-1.5a, 5-1.19, 5-1.21]{Hestenes_Sobczyk_1984}. The sectional curvature of the tangent plane $a\wedge b$ is
\begin{equation} \label{eq:sectional}
  K(a\wedge b) = -\frac{\langle R(a\wedge b)\,\widetilde{a\wedge b}\rangle_0}{\langle (a\wedge b)\,\widetilde{a\wedge b}\rangle_0},
\end{equation}
where the minus sign accounts for a Euclidean 2-blade squaring to $-1$; for a surface, $K = R/2$ is the Gaussian curvature. With these conventions the unit sphere has $K=1$ and $R=2$, and $S^n$, $H^n$, $\mathrm{dS}^n$ and $\mathrm{AdS}^n$ all come out correctly signed. None of these require a chart or Christoffel symbols, and are coherent in any dimension and signature.

\begin{lstlisting}[float=tp]
class VectorManifold:

    def __init__(...):
        # There are various ways to construct a Vector Manifold,
        # but every vector manifold must provide:
        self.alg = ... # The ambient Algebra object 
        self.I_fn = ... # Function M -> unit k blade of self.alg

        # If we can sample points on M (with a density),
        # then we provide a Monte Carlo, multivector valued
        # integration method 
        self.sample_pts = ...  
        self.dX_fn = ... # (scalar valued density)

        # Finally, we might have a bounding VectorManifold
        # to compute, e.g., normal derivatives 
        self.bdry = ... 

    def factorize(self, A):
        """ Given some $k$-field which is everywhere a blade, 
        return a list of vector factors [a1, ..., ak]: """
    

    def proj(self, F):
        """ The projection operator
        Given F : M -> Cl, returns P(F) = (F | I.inverse()) | I"""
        
    def derivative(self, F, cov=True, deriv_type='full'):
        """ The manifold derivative operator
        Given F : M -> Cl, returns the following operators D@F, where:
        - D in [partial_M, nabla_M] # Intrinsic or covariant
        - @ in [*, ^, |] # 'full', 'outer', 'inner'"""
    
    def laplacian(self, F, cov = True):
        """ Second order manifold derivatives
        Given F : M -> Cl, returns one of:
        -nabla^2 F (Hodge Laplacian, grade preserving)
        -partial^2 F"""

    def shape_operator(self, F):
        """ Hestenes's shape operator
        Given F: M -> Cl, S(F) = partial_M P(stop_grad(F))"""

    def riemann_curvature(self, a, b):
        """ Riemann curvature bivector R(a ^ b) = sigma * P(S_a x S_b),
        with which Ric(a) and scalar curvature can be defined."""

    def integrate(self, F, G=None):
        """ Directed Monte Carlo integral Given F, G : M -> Cl, 
        estimates int_M G dX F,  dX = I |dX|"""

    def transform(self, f, f_inv):
        """ Manifold transformation operator
        Given a right invertible f and a section f_inv, 
        we can create a new VectorManifold M'"""
\end{lstlisting}
As an example of how we used this library for the experiments, see these examples which build on the API we provide:
\begin{lstlisting}
# (1) Divergence-free flow on any 2-manifold
v = M.derivative(lambda x: phi_theta(x) * M.I_fn(x), deriv_type='inner')   # nabla_M | (phi I)
M.derivative(v, deriv_type='inner')                                       # = 0 identically

# (2) Hyperbolic plane from a flat disk in Cl(2,1) 
# Geodesic polar coordinates on the hyperboloid H^2 
# in Cl(2,1): apex e_n, e_n^2 = -1
def polar_to_hyperboloid(u):              
    r = norm(u)
    return cosh(r) * e_n + sinhc(r) * u   

def hyperboloid_to_polar(x):
    x_s = x.grade(1) - x_n(x) * e_n 
    s = norm(x_s)                     
    return (arcsinh(s) / s) * x_s    

H2 = disk.transform(polar_to_hyperboloid, hyperboloid_to_polar)

# (3) Differentiable Euler characteristic of a learned surface 
M1  = square.transform(flow_theta, flow_theta_inv)
chi = M1.integrate(lambda x: M1.I_fn(x).inverse(),
                   G=lambda x: M1.scalar_curvature(x) / (4*pi))
jax.grad(loss)(theta)                          # loss built from chi, differentiated end to end
\end{lstlisting}
With the directed measure $dX = I(x)\lvert dX \rvert$ and $dS$ the corresponding measure on $\partial M$ (if it exists), the fundamental theorem of multivector calculus becomes:
\begin{equation}
  \int_M \dot G\,dX\,\dot\partial_M F \;=\; \int_{\partial M} G\,dS\,F,
\end{equation}
for fields $F,G$ \citep{Hestenes_Sobczyk_1984,Doran_Lasenby_2003}. Its grade components recover Stokes', Green's, and the divergence theorem in any dimension. Our toolkit provides a computational way to approximate either side of this equation. In addition to giving you the ability to solve PDEs, this theorem also gives rise to constructive ways to approximate both inverse functions via the inverse function theorem or implicit functions via the implicit function theorem.

\section{Experiment Details}

\subsection{Shared Setup}
\label{sec:shared_setup}

All experiments are implemented in JAX with 64-bit floats. Except for the implicit-surface networks of Appendix~\ref{sec:minsurf_details}, every experiment runs on CPU; wall-clock times quoted below are from an Apple M3 laptop unless stated otherwise. The implicit-surface networks were trained on a cluster with one Nvidia L40 GPU for speed (but the training also could be done on the laptop).

\subsection{Explicit Manifold Learning}
\label{sec:explicit_details}

\subsubsection{Non-Euclidean Tilings}
\label{sec:tilings_details}
\paragraph{Symmetries}
This experiment requires 3 inputs: a manifold (as a \texttt{VectorManifold} object), a symmetry group acting on that manifold, and a target shape. On $S^2$, the group is the icosahedral rotation group $I\cong A_5$ of order 60, generated by rotations of order 5, 3 and 2 about a vertex, a face centre and an edge midpoint of the icosahedron. The initial tiling, the icosidodecahedron (12 pentagons and 20 triangles), is the orbit of one pentagonal and one triangular seed tile under this group. On $H^2$ the group is the $(2,3,7)$ Coxeter group, generated by the reflections in the sides of a hyperbolic triangle with angles $\pi/2$, $\pi/3$ and $\pi/7$. Each reflection is a single vector versor $X\mapsto -rXr^{-1}$ in the conformal algebra $\mathrm{Cl}(3,1)$ (in fact in its $\mathrm{Cl}(2,1)$ subalgebra spanned by $e_1, e_2, e_n$) \citep{Doran_Lasenby_2003, dorst2007geometric}. The rectified $\{3,7\}$ tiling is the orbit of a seed tile under this group. This construction easily generalizes to any hyperbolic $\{p,q\}$ tiling, i.e.\ any $p,q$ with $\frac1p + \frac1q < \frac12$. Because this group is infinite, for the figures we enumerate it breadth-first up to hyperbolic radius $\rho \leq 3.5$, for a total of 304 tiles (208 triangles and 96 heptagons); see Fig.~\ref{fig:tilings_initial}.

\paragraph{Model and Loss}
Recall that we desire a divergence free 1-field on our 2 dimensional manifold $M$, and achieve this via $v_\theta = \nabla_M \rfloor (\phi_\theta I_M)$ for a scalar valued neural network $\phi_\theta$, which guarantees $v_\theta$ is divergence free by construction. This network is a random-Fourier-feature network (64 frequencies giving 128 features, with scale 4.0 on $S^2$ and 2.5 on $H^2$) followed by an MLP of width 64, depth 2 and GELU activations. It depends only on position and not on time, so $v_\theta$ is an autonomous vector field whose time-1 flow is a diffeomorphism. To make the flow equivariant under the tiling's symmetry, we symmetrize the potential,
\begin{equation}
  \Phi_\theta(x) = \frac{1}{|G|}\sum_{g\in G} \mathrm{sgn}(g)\, \phi_\theta(g\cdot x),
\end{equation}
where $\mathrm{sgn}(g) = -1$ for orientation-reversing $g$: a reflection flips $I_M$, so $\Phi_\theta$ must be odd under reflections for $v_\theta$ to be equivariant. On $S^2$, $G$ is the full rotation group. On $H^2$, $G$ is a finite patch of 96 group elements (those mapping the origin into the 8 nearest tiles), so equivariance there is only approximate.

On $H^2$ we work on the hyperboloid in $\mathbb{R}^{2,1}$, parameterized by the geodesic polar coordinates $u\mapsto\cosh|u|\,e_n + \frac{\sinh|u|}{|u|}u$ of Appendix~\ref{sec:api_manifolds}, and display the result on the Poincar\'e disk by stereographic projection. To realize our tiling, we take the boundary of a single seed tile and integrate $v_\theta$ between $t=0$ and $t=1$ acting on these points using a fixed step size RK4 integrator (15 steps, $h=1/15$). On $S^2$ the points are reprojected onto the sphere after every step; on $H^2$ we integrate in the parameter $u$, pushing $v_\theta$ forward by the differential of the inverse parameterization (a single \texttt{jvp}), so no reprojection is needed.

Let $x_1,\ldots,x_T$ be the points on the boundary of the seed tile (7 per edge, so $T=35$ on $S^2$ and $T=49$ on $H^2$), and let $\Phi^\theta$ denote the RK4 flow map. Each $x_i$ is matched to the target point $y_i$ at the same polar angle $\vartheta_i$ about the tile centre. The targets are given by the radius profiles $r(\vartheta) = R_0\big(1 + 0.3\cos(5\vartheta+0.6)\big)$ on $S^2$ (a chiral, $\mathbb{Z}_5$-symmetric pinwheel) and $r(\vartheta) = R_0\big(1 + 0.18\cos 7\vartheta + 0.05\cos 14\vartheta\big)$ on $H^2$ (a $D_7$-symmetric gear), each mapped onto the manifold by the exponential map at the tile centre, with $R_0$ the mean radius of the seed pentagon on $S^2$ and the circumradius of the seed heptagon on $H^2$. We minimize
\begin{equation}
  \mathcal{L}(\theta) = \frac1T\sum_{i=1}^T \big\lVert \Phi^\theta(x_i) - y_i\big\rVert^2,
\end{equation}
where $\lVert\cdot\rVert$ is the Euclidean norm in $\mathbb{R}^3$ on $S^2$ and in Poincar\'e-disk coordinates on $H^2$.

\paragraph{Training \& Metrics}
We optimize the parameters of the neural network with the Adam optimizer with a learning rate of $1.5\cdot10^{-3}$ and gradient clipping at global norm 2, for 80 full-batch steps (seed 42). We achieve final losses of $1.25\cdot 10^{-3}$ on the sphere and $9.49\cdot 10^{-4}$ on the disk. We show how the tiling looks during the integration, along with the training loss curves, in Fig.~\ref{fig:tilings_snapshots}. Training for each took $<1$ hour on the laptop.

\begin{figure}[htbp]
\centering
\includegraphics[width=0.48\textwidth]{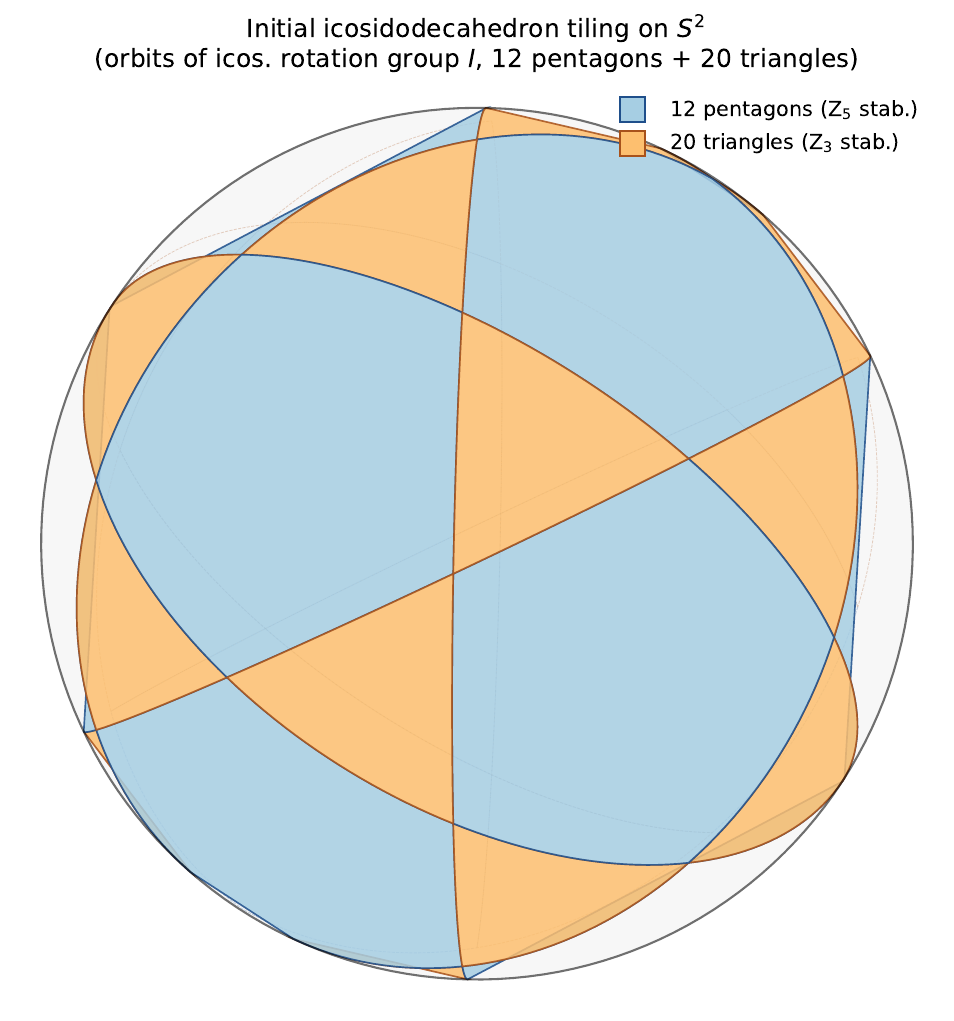}\hfill
\includegraphics[width=0.48\textwidth]{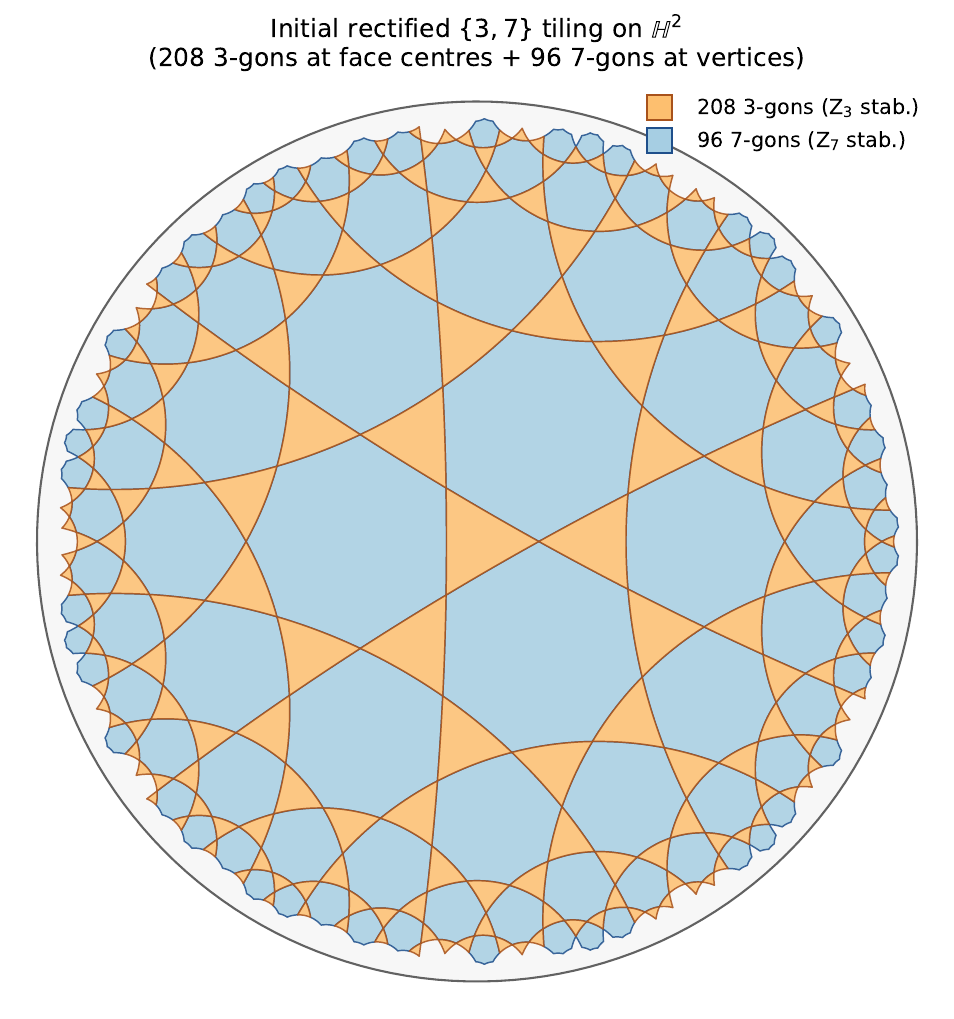}
\caption{Initial tilings. \emph{Left:} the icosidodecahedron on $S^2$. \emph{Right:} the rectified $\{3,7\}$ tiling of the Poincar\'e disk up to hyperbolic radius $3.5$.}
\label{fig:tilings_initial}
\end{figure}

\begin{figure}[htbp]
\centering
\includegraphics[width=0.98\textwidth]{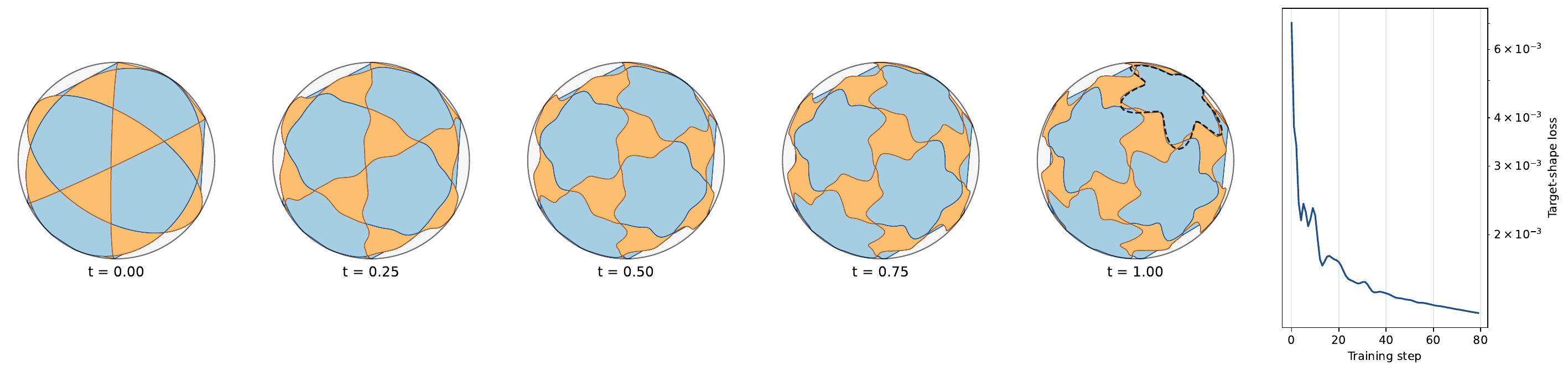}\\[0.3em]
\includegraphics[width=0.98\textwidth]{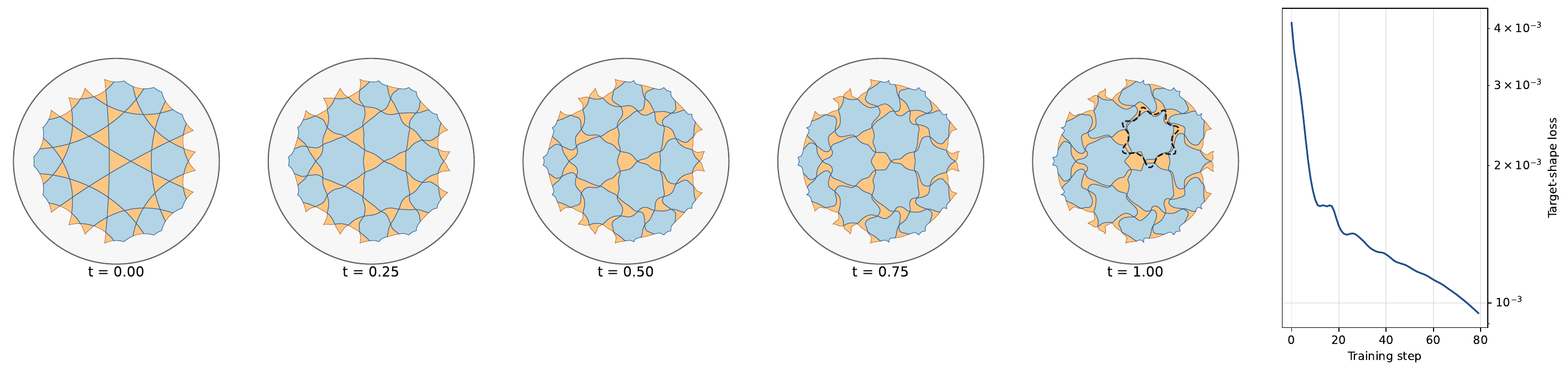}
\caption{The tilings under the learned divergence-free flow at $t = 0, 0.25, 0.5, 0.75, 1$. \emph{Top:} pentagons (blue) deformed into a chiral $\mathbb{Z}_5$ pinwheel on $S^2$, under the icosahedral rotation group. \emph{Bottom:} heptagons (blue) deformed into a $D_7$ gear on $H^2$, under the $(2,3,7)$ Coxeter group. In both, the triangles (orange) flow under the same trained velocity field, and the dashed curve at $t=1$ is the target. The rightmost panels show the training loss (log scale) against the training step.}
\label{fig:tilings_snapshots}
\end{figure}

\subsubsection{Backpropagation Through the Euler Characteristic}
\label{sec:flow_matching_details}

\paragraph{Flow Parameterization}
We parameterize a velocity field $v_\theta(t,x)$ on $\mathbb{R}^3$ as a ConcatSquash MLP \citep{grathwohl2018ffjord} with three layers $3\to96\to96\to3$ and tanh activations, where each layer computes $W_1x\odot\sigma(W_2t) + W_3t$, so that time enters through gating and bias. Integrating $v_\theta$ from $t=0$ to $t=1$ with fixed-step RK4 (4 steps) gives the flow map $\varphi_\theta$, a diffeomorphism of $\mathbb{R}^3$ \citep{chen2018neural}. As a corollary, any manifold $M\subset\mathbb{R}^3$ flowing under $\varphi_\theta$ will be diffeomorphically mapped to a new manifold $M'$; our AD toolbox immediately gives us access to all the differential operators on $M'$. Additionally, if $M$ is compact, then $M'$ will also be and so we can integrate on this manifold. This is what allows us to estimate quantities such as the Euler characteristic of the manifold. For the particular experiment in the main text, we took $M$ to be the square $[-1,1]^2\times\{0\}\subset\mathbb{R}^3$, parameterized by $u\in\mathcal{D}=[-1,1]^2$.

\paragraph{Loss}
For the experiment in the paper, we compute the scalar curvature $R$ (Appendix~\ref{sec:api_manifolds}) at each sampled point of the surface, using the pushed-forward coordinate vectors $r_u = \partial_u\varphi_\theta$, $r_v = \partial_v\varphi_\theta$ as the tangent frame:
\begin{equation}
    R = \partial_a\cdot\mathrm{Ric}(a) = \msum_{i} \langle e^i, \mathrm{Ric}(e_i)\rangle, \qquad \{e_1, e_2\} = \{r_u, r_v\}.
\end{equation}
On a 2-manifold the scalar curvature is simply a rescaling of the Gaussian curvature, $R = 2K$; Gaussian curvature, however, is only defined for 2-manifolds, whereas $R$ is defined in any dimension and signature and our library computes it with no dimension-specific code. We estimate the Euler characteristic from $B$ uniform samples $u_i\in\mathcal{D}$ via Eq.~\ref{eq:gauss_bonnet}:
\begin{equation}
    \hat{\chi}_\theta = \frac{|\mathcal{D}|}{4\pi B}\sum_{i=1}^B R_i\,\lvert\mathrm{d}X\rvert_i, \qquad \lvert\mathrm{d}X\rvert_i = \lvert r_u\wedge r_v\rvert(u_i).
\end{equation}
So, the loss function we minimize, matching the unit sphere's scalar curvature $R_\ast = 2$, is:
\begin{equation}
    \mathcal{L}(\theta) = \frac{\sum_i \lvert\mathrm{d}X\rvert_i\,\big(\tfrac12(R_i - R_\ast)\big)^2}{\sum_i \lvert\mathrm{d}X\rvert_i} + \lambda_{\mathrm{GB}}\big(\hat\chi_\theta/2 - 1\big)^2, \qquad \lambda_{\mathrm{GB}} = 1.5.
\end{equation}
As stated in the text, we optimize this function with and without the Gauss--Bonnet term ($\lambda_{\mathrm{GB}}=0$), and see the difference in the learned manifold $M'$. For these experiments, we train with $B=96$ fresh samples per step and evaluate with $B=4096$.

It may seem paradoxical that $\hat\chi_\theta$ changes during training at all, since $\chi$ should be invariant with respect to any diffeomorphism. The resolution is that Gauss--Bonnet for a surface with boundary reads
\begin{equation}
    \int_M K\,\lvert dX\rvert + \int_{\partial M} k_g\,dS = 2\pi\chi(M)
\end{equation}
with $k_g$ the geodesic curvature of the boundary. Since the square has $\chi = 1$, our estimator satisfies $\hat\chi_\theta = 1 - \frac{1}{2\pi}\int_{\partial M} k_g\,dS$. Driving $\hat\chi_\theta\to 2$ therefore forces $\int_{\partial M}k_g\,dS \to -2\pi$, i.e.\ the boundary must shrink to a small loop that pinches the surface closed, which is exactly the behavior in Fig.~\ref{fig:exp2_euler_v2}.

\paragraph{Training \& Metrics}
We train the parameters $\theta$ via the Adam optimizer with a learning rate of $2\cdot10^{-3}$ and gradient clipping at global norm 5 (seed 0), for 4000 iterations with the Gauss--Bonnet term and 150 iterations without it. We show loss curves, along with the intermediate Euler characteristic estimates, in Figs.~\ref{fig:flow_cap} and~\ref{fig:flow_sphere}. The final Euler characteristic estimate (on $B=4096$ evaluation samples) comes out to $\hat{\chi} \approx 1.97$, with surface area $12.29$ ($98\%$ of $4\pi$) and area-weighted mean Gaussian curvature $1.006$, indicating that the learned $M'$ is pretty close to a fully closed unit sphere, whereas without the Gauss--Bonnet term the square collapses to a small spherical cap with $\hat\chi\approx 0.05$, area $0.34$ and mean curvature $0.98$. On an Apple M3 laptop CPU, training takes 444\,s with the Gauss--Bonnet term (4000 iterations) and 78\,s without it (150 iterations). We also show more intermediate steps of the final learned flows being applied in the top rows of the same figures.

\begin{figure}[htbp]
\centering
\includegraphics[width=0.98\textwidth]{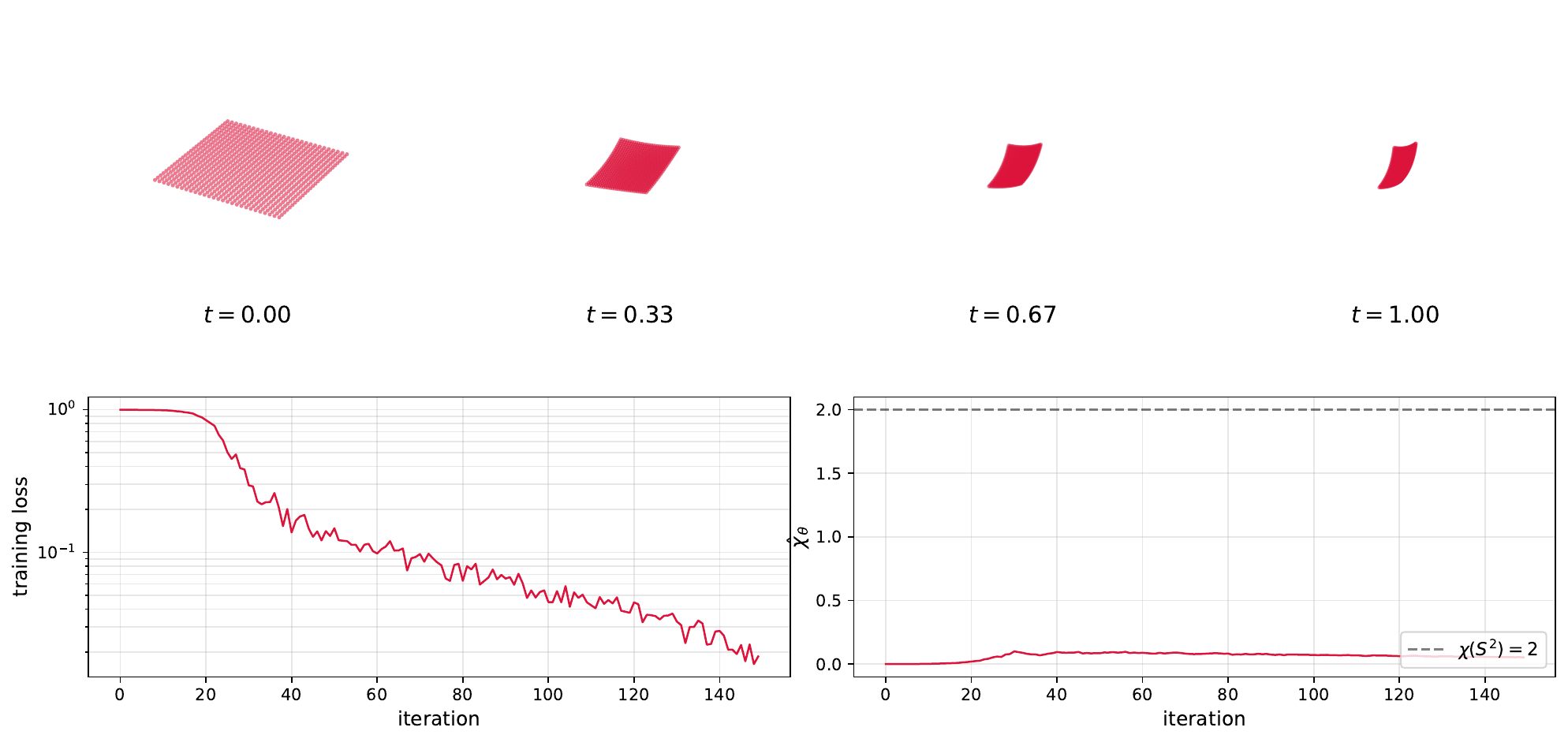}
\caption{The square under the curvature loss alone ($\lambda_{\mathrm{GB}} = 0$). \emph{Top:} the learned flow at $t = 0, \tfrac13, \tfrac23, 1$, which collapses the square into a small spherical cap. \emph{Bottom:} training loss (left) and the Monte Carlo Euler characteristic estimate $\hat\chi_\theta$ (right) over training; without the topological term, $\hat\chi_\theta$ stays far from the sphere's value of 2.}
\label{fig:flow_cap}
\end{figure}

\begin{figure}[htbp]
\centering
\includegraphics[width=0.98\textwidth]{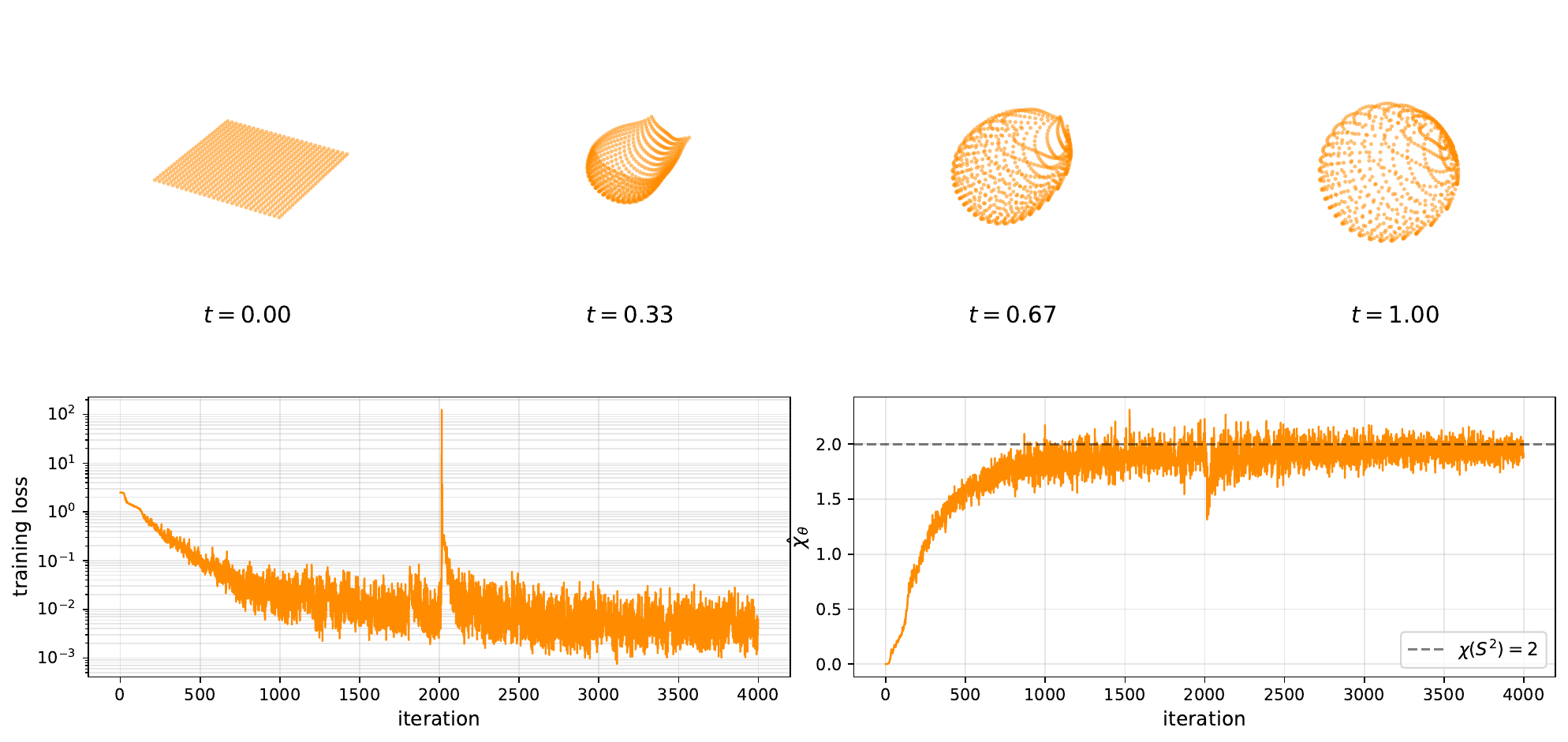}
\caption{The square under the curvature loss with the Gauss--Bonnet term ($\lambda_{\mathrm{GB}} = 1.5$). \emph{Top:} the learned flow at $t = 0, \tfrac13, \tfrac23, 1$, which closes the square into a sphere. \emph{Bottom:} training loss (left) and the Monte Carlo Euler characteristic estimate $\hat\chi_\theta$ (right) over training, which approaches 2.}
\label{fig:flow_sphere}
\end{figure}

\subsection{Implicit Manifold Learning}
\label{sec:implicit_details}

\subsubsection{Minimal Surfaces}
\label{sec:minsurf_details}

\paragraph{Surface Representation}
As stated in the text, the idea of this experiment is to parameterize an $(n-k)$-dimensional surface with boundary $\Gamma$ via the $k$-form $\omega_\theta = \omega_\Gamma + \partial_{\mathcal{I}_1} \wedge \eta_\theta$ where $\omega_\Gamma$ is the Biot-Savart field of Eq.~\ref{eq:biotsavart}. Here $G(x) = |x|^{2-n}/((n-2)S_{n-1})$ is the Green's function of the Laplacian in $\mathbb{R}^n$, with $S_{n-1}$ the area of the unit $(n-1)$-sphere, so $\partial_{\mathcal{I}_1} G(x) = -x/(S_{n-1}|x|^n)$. Computationally, we evaluate the integral on a fixed closed polyline discretization of $\Gamma$, summing $\partial_{\mathcal{I}_1}G(x-m_i)\wedge \Delta\Gamma_i$ over segment midpoints $m_i$ and segment vectors $\Delta\Gamma_i$ and dualizing. We use 120 segments in codimension one; in codimension two we use 480 segments for the torus knot and 320 for the polynomial graph, and regularize the kernel by replacing $|x|^2$ with $|x|^2+\epsilon^2$, $\epsilon = 0.06$. $\eta_\theta$ is a random-Fourier-feature network (Gaussian frequencies, $\sin$ and $\cos$ features) followed by a softplus MLP producing a $(k-1)$-form: in codimension one, a scalar potential with 1024 frequencies at scale 2.0 and an MLP of width 256 and depth 4; in codimension two, a 1-form with 96 frequencies at scale 0.5 and an MLP of width 128 and depth 3 whose last layer is initialized to zero, so that training starts from $\omega_\theta = \omega_\Gamma$. Again using $(\partial_{\mathcal{I}_1} \wedge)^2 = 0$, this parameterization ensures that $\partial_{\mathcal I_1} \wedge \omega_\theta = \delta_\Gamma$, which is equivalent to the condition that the represented surface has boundary $\Gamma$.

\paragraph{The Mass Norm}
As stated in the text, the objective that we minimize is the \emph{mass norm} of $\omega_\theta$. While we gave an informal definition for the mass norm, the formal definition is:
\begin{equation}
    \lVert \omega \rVert := \inf\Big\{ \msum_j \lvert \beta_j\rvert \;:\; \omega = \msum_j \beta_j B_j,\ B_j \text{ unit $k$-blades}\Big\}.
\end{equation}
There is no closed form for this infimum for $3 \le k \le n-3$, so this is why we focus our experiments on the $(n,k) = (3,1)$ and $(4,2)$. For $k=1$ the mass norm is the Euclidean norm. For the $n=4, k=2$ case, there is a closed form \citep{federer1969gmt} that is easily differentiable almost everywhere: splitting $\omega$ into its self-dual and anti-self-dual parts $\omega_\pm = \frac12(\omega \pm \star\omega)$,
\begin{equation}
    \lVert\omega\rVert = \sqrt{2}\,\max\big(\lvert\omega_+\rvert, \lvert\omega_-\rvert\big).
\end{equation}
Minimizing this loss ends up turning $\omega_\theta$ into an approximately simple bivector (one that is a blade). In practice we smooth the pointwise norm as $m_\delta(\omega) = \sqrt{\lVert\omega\rVert^2 + \delta^2}$, and minimize a Monte Carlo estimate of its integral over a box $\Omega\supset\Gamma$,
\begin{equation}
    \mathcal{L}(\theta) = \frac{1}{N}\sum_{i=1}^N \frac{m_\delta\big(\omega_\theta(x_i)\big)}{q(x_i)}, \qquad x_i\sim q,
\end{equation}
with $N = 4096$. For the first $25\%$ of training $q$ is uniform on $\Omega$; afterwards it is a mixture of $30\%$ uniform samples and $70\%$ samples from a lattice with cell probabilities proportional to $|\omega_\theta|$, refreshed every 250 steps. The box is $[-1.5,1.5]^3$ in codimension one and a cube of half-width $\max|\Gamma| + 0.5$ in codimension two. In codimension one $\delta = 10^{-6}$ is fixed; in codimension two $\delta$ is annealed log-linearly from $5\cdot10^{-2}$ to $5\cdot10^{-6}$ over training.

\paragraph{Extraction}
The learned surface is read off from the $\omega_\theta$ alone; the extractor never sees the closed-form surface. In codimension one, $\omega_\theta = \omega_\Gamma + \nabla f$ with $f$ the learned scalar potential, and the surface is the level set of $f$ at its mean value on $\Gamma$, computed by marching cubes on a $192^3$ grid. We keep only the faces where the Biot--Savart field $\omega_\Gamma$ and $\nabla f$ point the same way and the total field is large, i.e.\ where the current actually concentrates, then clean the mesh (largest component, hole filling), smooth it, and snap its rim onto $\Gamma$. In codimension two the surface is the ridge of $|\omega_\theta|$. We seed points on a grid around $\Gamma$ (keeping the top $15\%$ by mass) and on $\Gamma$ itself, and move each by a few Newton steps that maximize $\log|\omega_\theta|$ within the normal plane of $\omega_\theta$ (a simple $\omega_\theta$ determines its own tangent and normal planes). We keep points at a local maximum inside a neighbourhood of the convex hull of $\Gamma$, densify the result by advancing along the tangent planes, and discard points that are not a maximum relative to a ring of radius $2.5\epsilon$ in the normal plane. For rendering only, the points are meshed by local Delaunay triangulation in their tangent planes.

\paragraph{Training \& Metrics}
We use Adam with gradient clipping at global norm 1 and a staircase exponential learning-rate decay. In codimension one we train for $10^5$ steps with learning rate $5\cdot10^{-4}$, decayed by $0.6$ every $10^4$ steps, which takes about 40 minutes per boundary on one GPU. In codimension two we train for $27{,}000$ steps with learning rate $5\cdot10^{-3}$, decayed by $0.7$ every $2{,}700$ steps (seed 0). The codimension-two ground truths are the graph of $w = z^3 - \lambda z$ over $|z|\le\rho$ ($\rho = 0.8$, $\lambda = 0.6$; exact area $\pi\rho^2(1+\lambda^2) + 3\pi\rho^6 = 5.205$) and the branched disk $u\mapsto(ru^2, su^3)$, $|u|\le 1$ ($r = 0.7$, $s = 0.5$; exact area $\pi(2r^2 + 3s^2) = 5.435$), whose boundary is the $(2,3)$ torus knot. Both are holomorphic curves and hence mass-minimizing \citep{federer1969gmt}. Table~\ref{tab:codim2_metrics} compares the extracted point clouds against dense samples of the exact surfaces.

\begin{table}[htbp]
\centering
\caption{Extracted codimension-two surfaces against the exact minimal surfaces. Chamfer is the symmetric mean nearest-neighbour distance; Hausdorff distances are one-sided (extraction to reference); coverage is the fraction of the reference within $0.03$ of the extraction, and spurious the fraction of extracted points farther than $0.03$ from the reference. Area is that of the render mesh.}
\label{tab:codim2_metrics}
\footnotesize
\setlength{\tabcolsep}{4pt}
\begin{tabular}{@{}lcccccc@{}}
\toprule
boundary & Chamfer & Hausdorff (95\%) & Hausdorff (max) & coverage & spurious & area (exact) \\
\midrule
torus knot & 0.0099 & 0.014 & 0.050 & 94.8\% & 0.13\% & 5.53 (5.44) \\
polynomial graph & 0.0150 & 0.028 & 0.046 & 99.96\% & 1.51\% & 5.12 (5.21) \\
graph $w = z^2$ & 0.0123 & 0.023 & 0.041 & 100.0\% & 0.42\% & 4.69 (4.58) \\
annulus $zw = c$ & 0.0083 & 0.016 & 0.034 & 100.0\% & 0.08\% & 3.24 (3.08) \\
two disks $w = (c/\rho^2)\bar z$ & 0.0126 & 0.021 & 0.048 & 97.4\% & 1.04\% & 5.80 (5.75) \\
\bottomrule
\end{tabular}
\end{table}

In addition to the two surfaces shown in the main text, we tested three further boundaries, with results in Table~\ref{tab:codim2_metrics} and Fig.~\ref{fig:codim2_families}: the holomorphic graph $w = z^2$ over $|z|\le 0.8$ (exact area $4.584$); two circles $|z| = r$, $|z| = R$ on the curve $zw = c$ ($r = 0.55$, $R = 1$, $c = 0.35$), which with opposite orientations bound the holomorphic annulus $\{zw = c\}$ (area $3.079$) and with equal orientations bound the pair of anti-holomorphic disks $w = (c/\rho^2)\bar z$ (area $5.749$). In every case the extraction matches the exact surface closely. Fig.~\ref{fig:codim2_main_angles} shows the two surfaces of the main text at more projection angles.

\begin{figure}[htbp]
\centering
\includegraphics[width=0.84\textwidth]{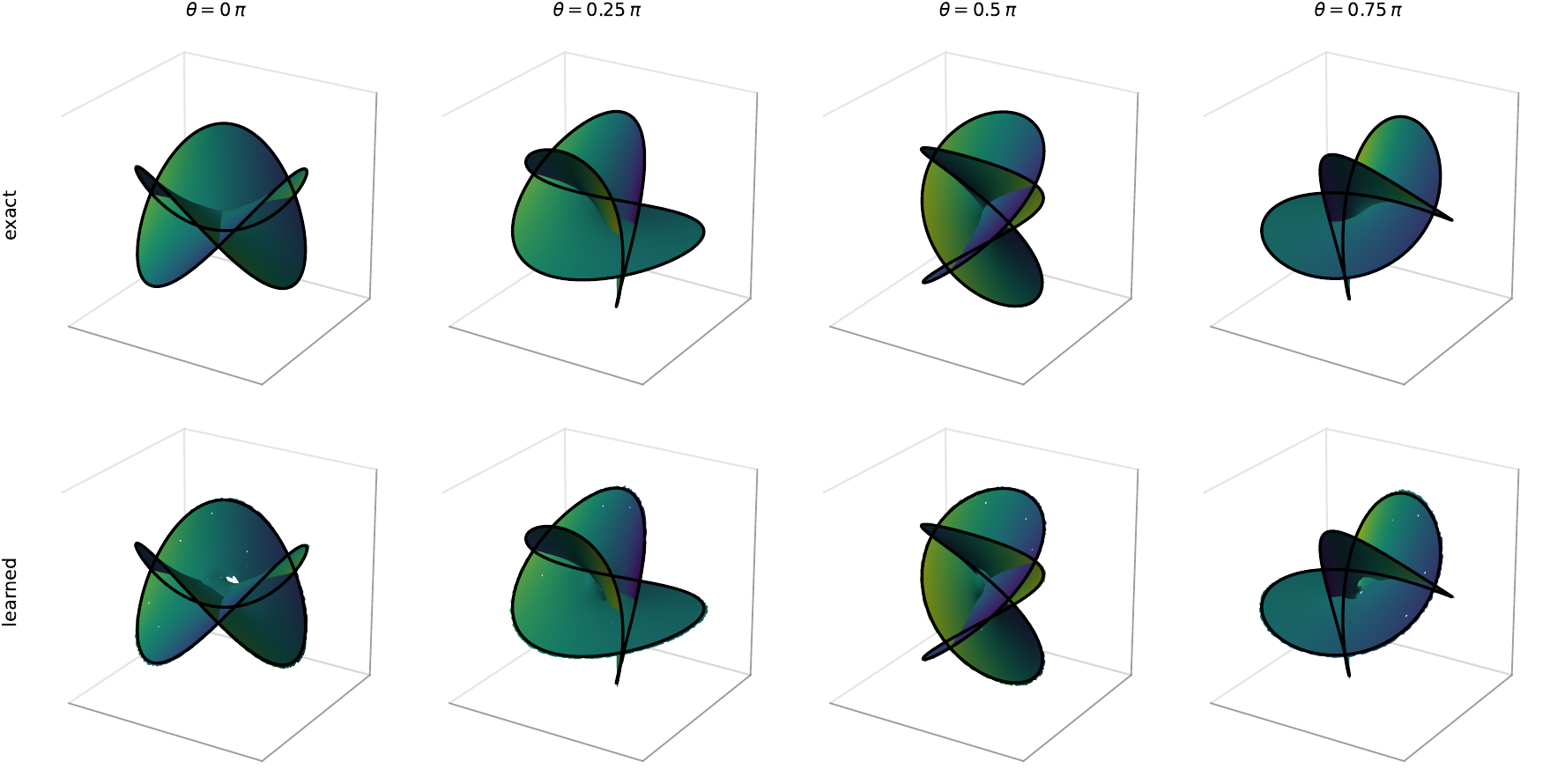}\\
{\small branched disk bounding the $(2,3)$ torus knot}\\[1em]
\includegraphics[width=0.84\textwidth]{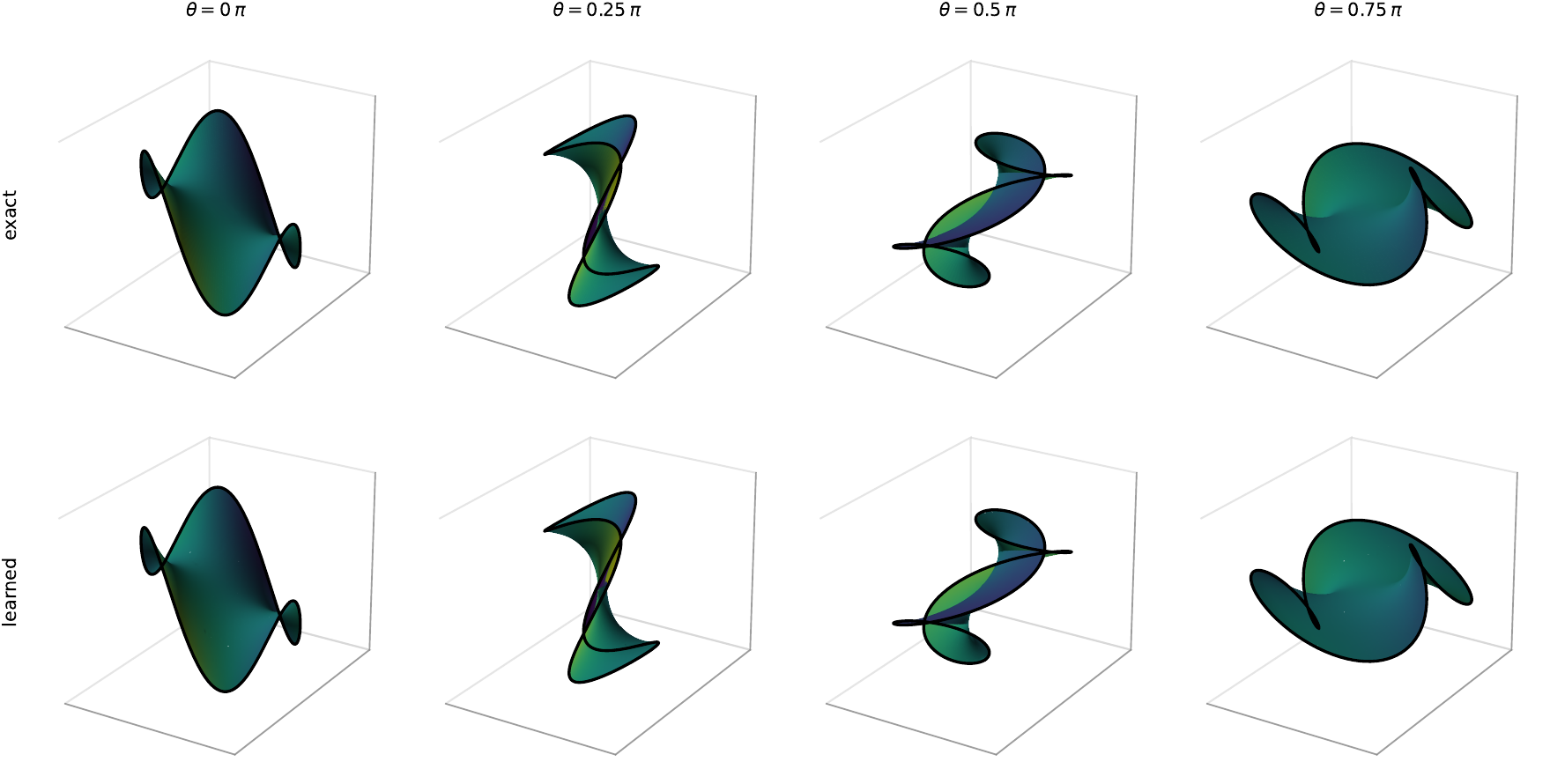}\\
{\small graph $w = z^3 - \lambda z$}
\caption{The two codimension-two surfaces of Figs.~\ref{fig:exp3_codim2_v2} and~\ref{fig:exp3_codim2_torus_v2} at four projection angles $\theta = 0, \pi/4, \pi/2, 3\pi/4$. Each block shows the exact minimal surface (top row) and the surface extracted from the learned field (bottom row).}
\label{fig:codim2_main_angles}
\end{figure}

\begin{figure}[htbp]
\centering
\includegraphics[width=0.84\textwidth]{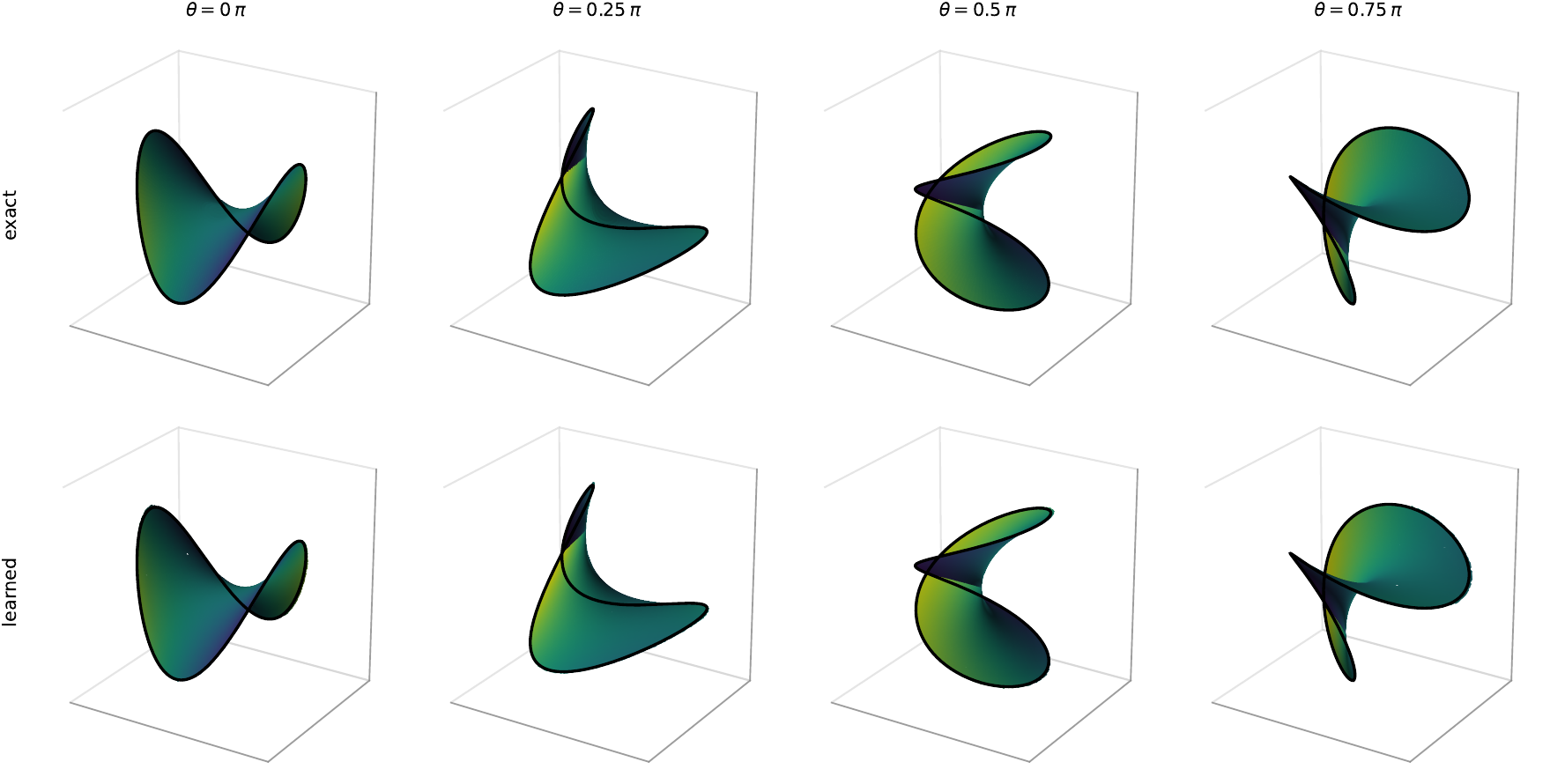}\\
{\small graph $w = z^2$}\\[1em]
\includegraphics[width=0.84\textwidth]{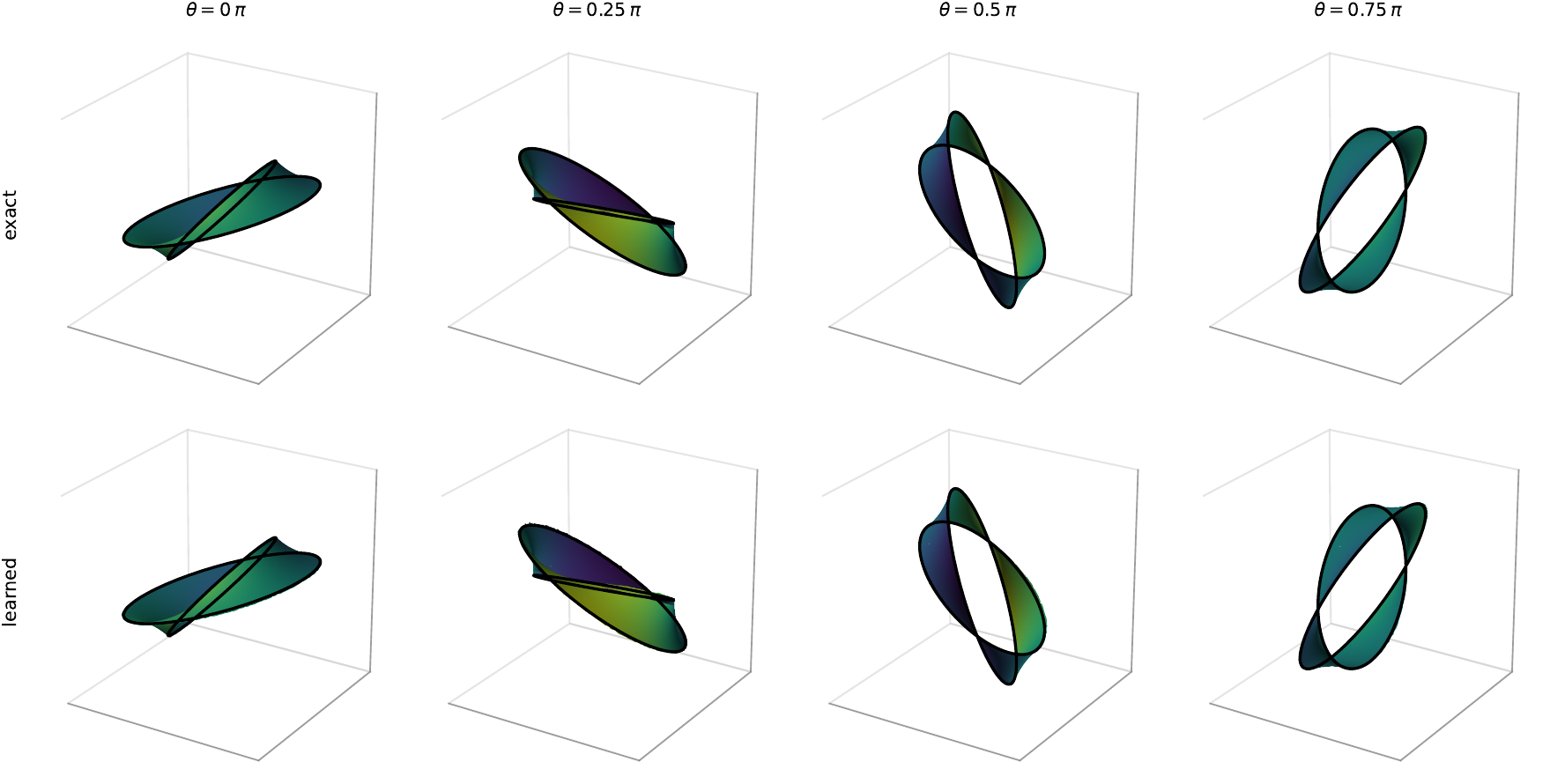}\\
{\small annulus $zw = c$}\\[1em]
\includegraphics[width=0.84\textwidth]{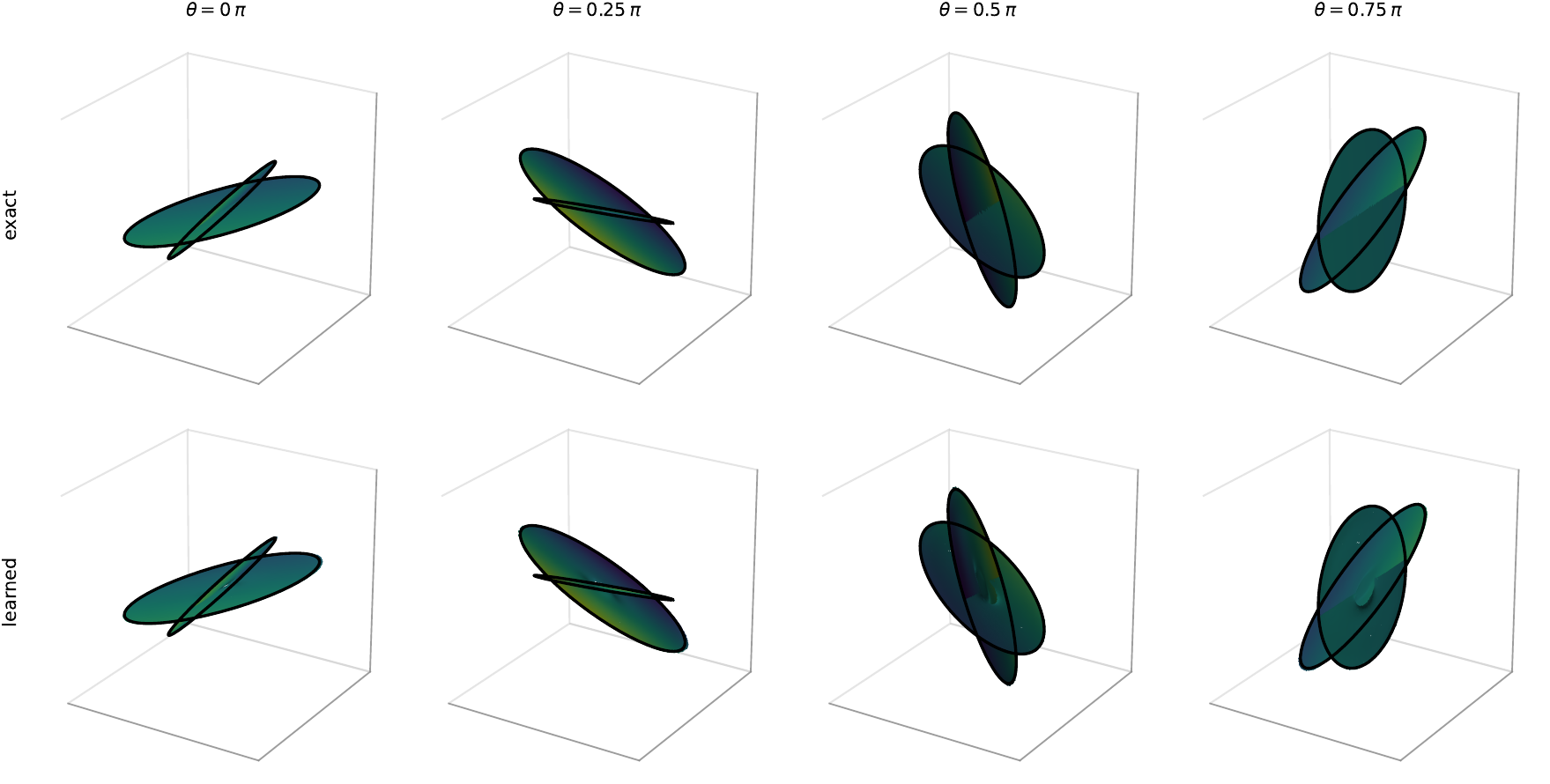}\\
{\small two disks $w = (c/\rho^2)\bar z$}
\caption{The three further codimension-two boundaries of Table~\ref{tab:codim2_metrics} at four projection angles $\theta = 0, \pi/4, \pi/2, 3\pi/4$. Each block shows the exact minimal surface (top row) and the surface extracted from the learned field (bottom row), rendered as in Fig.~\ref{fig:codim2_main_angles}.}
\label{fig:codim2_families}
\end{figure}

\subsection{Inverse Design in Spacetime}
\label{sec:em_details}
\begin{figure}[p]
\centering
\begin{subfigure}{\textwidth}
\centering
\includegraphics[width=0.64\textwidth]{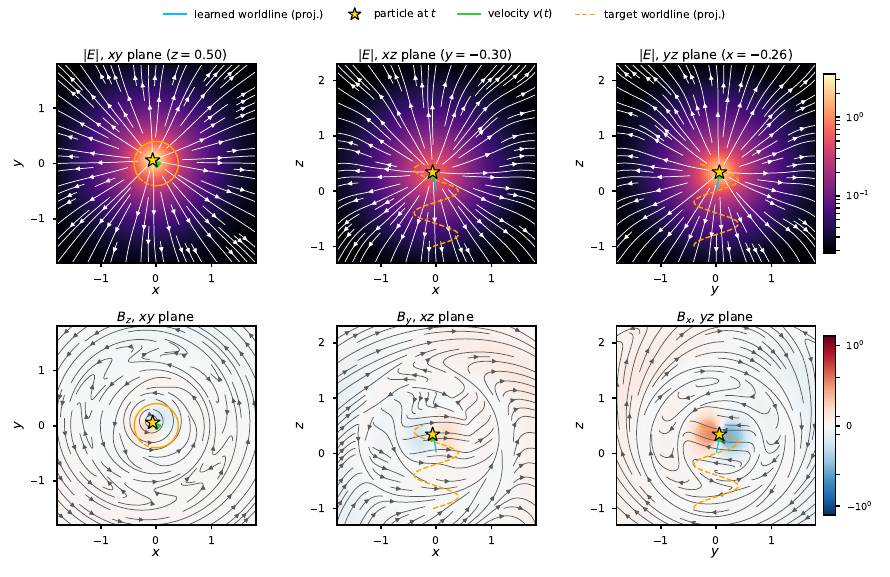}
\caption{After 50 training steps.}
\label{fig:em_stage_early}
\end{subfigure}

\vspace{0.6em}
\begin{subfigure}{\textwidth}
\centering
\includegraphics[width=0.64\textwidth]{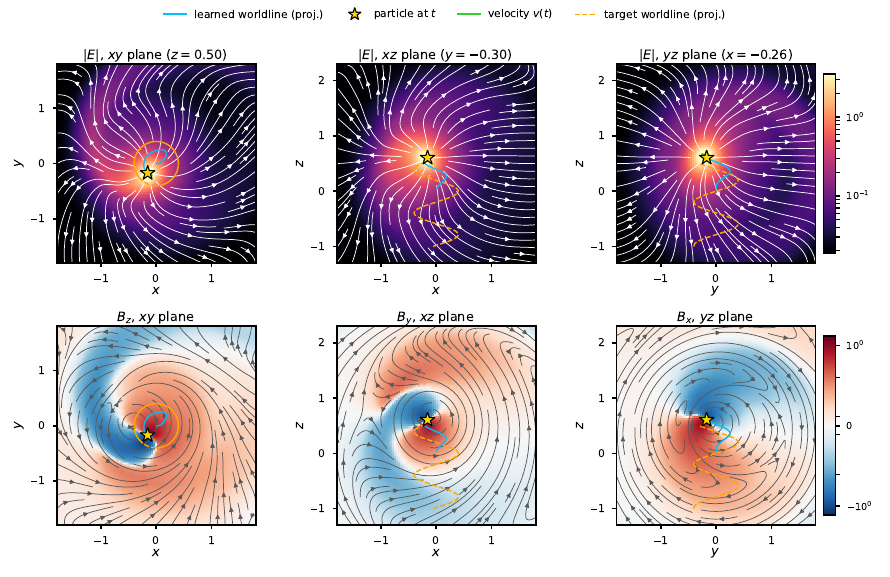}
\caption{After 70 training steps.}
\label{fig:em_stage_mid}
\end{subfigure}

\vspace{0.6em}
\begin{subfigure}{\textwidth}
\centering
\includegraphics[width=0.64\textwidth]{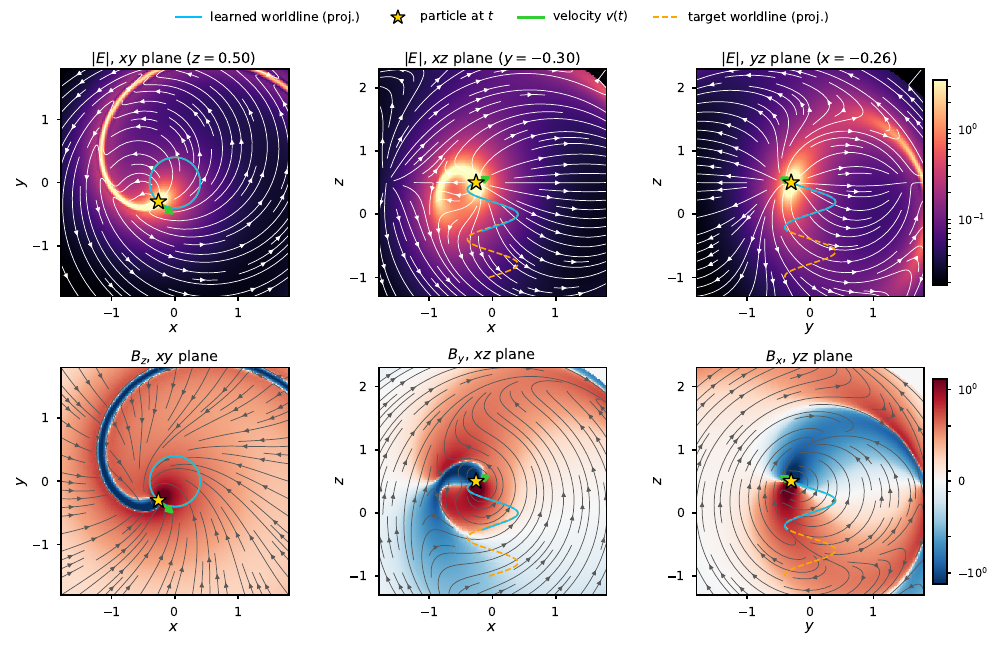}
\caption{Final learned worldline, which reproduces the field of the target (Fig.~\ref{fig:em_fwd_v2}).}
\label{fig:em_stage_final}
\end{subfigure}
\caption{Field of the learned worldline during training, at $t = 2$: $|E|$ with in-plane $E$ streamlines (top) and the out-of-plane component of $B$ with in-plane $B$ streamlines (bottom), on the $xy$, $xz$ and $yz$ planes through the target particle. The learned worldline is drawn solid and the target dashed.}
\label{fig:em_stages}
\end{figure}

\paragraph{Forward Simulator}
We work in the spacetime algebra $\mathrm{Cl}(1,3)$ with basis $e_0, e_1, e_2, e_3$ with $e_0^2 = 1$, $e_1^2=e_2^2=e_3^2=-1$. We parameterize the particle's spatial velocity $v_\theta(t)$ in lab time $t$ via an MLP (width 64, depth 3, GELU activations) of the Fourier features $(t, \sin f_k t, \cos f_k t)$ with 8 frequencies $f_k$ evenly spaced in $[0.5, 6]$, and constrain the raw output $w$ to $v = w / \sqrt{1+\lvert w \rvert^2}$, which enforces $\lvert v \rvert < 1$. The reason for this is that we want the resulting worldline $x_\theta(t) = t\,e_0 + r_\theta(t)$ to be \emph{timelike} (slower than speed of light), which any particle with mass should obey. The spatial position $r_\theta(t)$ is the trapezoidal integral of $v_\theta$ on 341 knots over $t\in[-1,7]$, with cubic Hermite interpolation between knots so that $\dot r_\theta = v_\theta$ exactly, and the particle's 4-velocity is proportional to $e_0 + v_\theta$. Then, suppose we observe something at spacetime location $y$. We desire to find the \emph{retarded time} $\tau_r$, the time at which the particle emitted the signal that the observer at $y$ receives, i.e.\ the root of $(y - x_\theta(\tau_r))^2 = 0$ on the past light cone of $y$. This boils down to a 1 dimensional root finding problem, which we solve with a differentiable bisection search on $[-50, t_y - 10^{-3}]$ to a tolerance of $10^{-8}$ using Optimistix \citep{rader2024optimistix}. The root is differentiated with respect to the trajectory via the implicit function theorem \citep{griewank2008evaluating, blondel2022efficient}. This allows us to compute the Li\'enard--Wiechert potential
\begin{equation}
    A(y) = \frac{q}{4\pi} \frac{u(\tau_r)}{\big\lvert\langle y-x_\theta(\tau_r), u(\tau_r) \rangle\big\rvert}, \qquad u = e_0 + v_\theta,
\end{equation}
with $q = 1$; the normalization of $u$ cancels between numerator and denominator. From this, the Faraday bivector is the single call $F = \partial_{\mathcal{I}_1} \wedge A$. Using the decomposition $F= E + IB$ with pseudoscalar $I = e_{0123}$, where $E = \frac12(F - e_0Fe_0)$ and $B = \frac12 I^{-1}(F + e_0Fe_0)$, we get a recipe to read off the $E$ and $B$ fields in the $\sigma_k = e_k e_0$ frame, which is how we generate the streamline plots in Fig.~\ref{fig:em_fwd_v2}. The final loss is a relative mean squared error between the predicted and observed $F$ over the $N$ observation events,
\begin{equation}
    \mathcal{L}(\theta) = \frac{1}{N}\sum_{i=1}^N \frac{\big\lVert F_\theta(y_i) - F_\ast(y_i)\big\rVert^2}{\big\lVert F_\ast(y_i)\big\rVert^2},
\end{equation}
where $\lVert\cdot\rVert$ is the Euclidean norm on the bivector coefficients. We normalize per event because $\lVert F\rVert^2 \sim |y - x|^{-4}$ spans more than four orders of magnitude across the observation events (a factor of about 200 between the 5th and 95th percentiles).

\paragraph{Training \& Metrics}
The target trajectory worldline is the helix $r_\ast(t)=(0.4\cos 2t,\allowbreak 0.4 \sin 2t,\allowbreak 0.25t )$ in lab time $t$, and we reconstruct it over the window $t\in[1,4]$. To provide data for the model, we evaluate the exact field $F_\ast$ on a spatial grid of $5\times5\times4$ points covering $[-1.8,1.8]\times[-1.8,1.8]\times[-0.4,1.6]$ at 21 evenly spaced times in $[1,6.5]$, excluding events within a radius 1 of the true worldline where the potential is singular; this leaves 1944 noise-free training events. The 20 midpoint time slices (1848 events) are held out for model selection. The observation times extend past the reconstruction window because an observation at time $t$ and distance $d\ge1$ from the particle constrains the worldline only at the retarded time $t - d$. We optimize the trajectory with Adam for 7000 full-batch steps, with gradient clipping at global norm 1 and a learning rate warmed up over 100 steps to $3\cdot10^{-3}$ and then cosine-decayed to $2\cdot10^{-5}$ (seed 1), keeping the checkpoint with the lowest held-out loss.

The recovered worldline has a mean position error of $6\cdot10^{-4}$ and a maximum of $1.2\cdot10^{-3}$ over $t\in[1,4]$, i.e.\ $0.2\%$ and $0.3\%$ of the orbit radius. Training takes 6 minutes on an Apple M3 laptop CPU. As a check of the forward model, the vacuum Maxwell residual $\partial_{\mathcal{I}_1}\cdot F$ at 16 random events away from the worldline has maximum $1.8\cdot10^{-7}$ and mean $1.6\cdot10^{-8}$, against field magnitudes of order one, consistent with the root-finder tolerance.

Causal coverage matters: training the same recipe on observation slices confined to the reconstruction window $[1,4]$ (12 slices) recovers the early worldline (mean error $1.7\cdot10^{-3}$ on $[1,2]$) but not the tail, which no observation can see: on $[3,4]$ the mean error is $4.6\cdot10^{-2}$ (maximum $0.25$), against $5\cdot10^{-4}$ (maximum $8\cdot10^{-4}$) with the extended observation window.

Fig.~\ref{fig:em_stages} shows how the field of the learned worldline evolves during training, at $t = 2$ and on the same three planes through the target particle as Fig.~\ref{fig:em_fwd_v2}, with shared colour scales. After 50 steps (mean position error $0.48$) the learned particle is nearly at rest, so $E$ is almost radial and $B$ is weak. After 70 steps (error $0.22$) the worldline has picked up the helical motion and the field begins to wind into a spiral. The final model reproduces the target field, including the sharp spiral wavefront emitted by the orbiting charge.

\subsection{Optimization on Rotor Groups}
\label{sec:rotor_details}

\paragraph{Multiplicative Descent}
Let $G\subset\mathrm{Spin}(p,q)^+$ be a rotor group whose Lie algebra $\mathfrak{g}$ is spanned by bivectors $\{B_k\}_{k=1}^{\dim\mathfrak{g}}$, with reciprocal basis $\{B^k\}$ in $\mathfrak{g}$. The left-trivialized differential of $f$ at $g$ is the covector $df_e(g)[B] = df_g[gB]$ on $\mathfrak{g}$. Writing $\partial f(g) := \partial_{\mathcal{I}_{\mathrm{even}}} f(g)$ and using that the scalar part is cyclic, $df_g[gB] = \langle \partial f(g), gB\rangle = \langle \partial f(g)\, g, B\rangle$, so its coordinates in the basis $\{B^k\}$ are
\begin{equation} \label{eq:lie_coords}
    c_k(g) = df_e(g)[B_k] = \big\langle \partial f(g)\, g,\; B_k\big\rangle, \qquad
    df_e(g) = \msum_k c_k(g)\, B^k = P_{\mathfrak{g}^*}\big(\partial f(g)\, g\big).
\end{equation}
This is Eq.~\ref{eq:lie_deriv} of the main text, where the restriction to $\mathfrak{g}$ is made explicit by $P_{\mathfrak{g}^*}$: $\partial f(g)\,g$ is a general even multivector, and only its pairing with the generators enters the update. For a scalar loss, $\partial f(g)$ is a single \texttt{vjp} (Lemma~\ref{lem:mv-jvp}). In practice we feed the coordinates $c_k$ to Adam, whose step $u_t\in\mathfrak{g}$ is exponentiated, $g_{t+1} = g_t\exp(u_t)$. This ensures every iterate is a valid rotor. The exponential is the algebra's general multivector exponential; in $\mathrm{Cl}(1,3)$ we evaluate it as the matrix exponential of the $16\times16$ left-multiplication operator $X\mapsto BX$, which JAX differentiates exactly. A subgroup of $G$ spanned by a subset of the generators is selected by a $0/1$ mask on the coordinates $c_k$, so several subgroups of the same algebra can be optimized in a single vectorized call.

\paragraph{Choosing a Positive Definite Metric}
As discussed in the main text, $\sharp:\mathfrak{g}^*\to\mathfrak{g}$ must come from a positive-definite inner product on $\mathfrak{g}$ for $-\eta\,df_e^\sharp$ to be a descent direction, since then $df_e[-\eta\,df_e^\sharp] = -\eta\lVert df_e\rVert^2 < 0$. The multivector inner product $\langle A, B\rangle = \langle AB\rangle_0$ fails this in every case we consider. The metric on $\mathfrak{spin}(n)$ it is negative definite, since Euclidean bivector blades square to $-1$, so the ``natural'' step would be an ascent step. On $\mathfrak{spin}(1,3)$ it is indefinite (boosts $e_0e_i$ square to $+1$, rotations $e_ie_j$ to $-1$), and on the translation subalgebra of $\mathrm{CGA}(3)$ it vanishes identically, since the generators $e_i\wedge e_\infty$ are null and mutually orthogonal. Nor is there a canonical alternative: non-compact groups admit no bi-invariant metric, so some non-invariant choice is unavoidable. We declare the chosen generators orthonormal,
\begin{equation}
    \langle B_j, B_k\rangle_\sharp := \delta_{jk}, \qquad
    df_e(g) = \msum_k c_k B^k \;\longmapsto\; df_e(g)^\sharp = \msum_k c_k B_k,
\end{equation}
which is the identity on coordinates. This coincides with the choice a coordinate-array AD system makes implicitly, but here it is an explicit, replaceable ingredient.

\paragraph{Chart-based baselines}
We compare against two updates that keep chart coordinates $b\in\mathbb{R}^{\dim\mathfrak{g}}$, $g = \exp(\sum_k b_kB_k)$, as the optimizer state. The \emph{static chart} (static trivialization, the recipe of \citet{lezcano2019trivializations}) runs Adam on $b$ directly. Its gradient is not $df_e$: with $T_B := \exp(-B)\,d\exp_B = \varphi(-\mathrm{ad}_B)$, $\varphi(x) = (e^x-1)/x$ and $\mathrm{ad}_B C = BC - CB$, the chart gradient is $T_B^*\,df_e(\exp B)$, which agrees with $df_e$ only at $B = 0$. The \emph{corrected chart} undoes this factor, applying $(T_B^{-1})^*$ to the gradient and $T_B^{-1}$ to the Adam step; it agrees with the multiplicative update to first order and differs only in the retraction. We evaluate $\varphi(-\mathrm{ad}_B)$ by a block matrix exponential. Both chart updates break down where $T_B$ is singular.

\paragraph{Subgroups of the conformal group}
The ten bivector generators of $\mathrm{CGA}(3) = \mathrm{Cl}(4,1)$, written in the null basis $e_o = \frac12(e_+ + e_-)$, $e_\infty = e_- - e_+$, split into three rotations $e_{ij}$, three translations $e_i\wedge e_\infty$, three transversions $e_i\wedge e_o$, and the dilation $e_+\wedge e_-$. The four groups of Fig.~\ref{fig:lie_subalgebras} are coordinate subspaces of this basis: translations $T(3)$ (dimension 3), rigid motions $SE(3)$ (rotations and translations, dimension 6), similarities $\mathrm{Sim}(3)$ (rigid motions and dilation, dimension 7), and the full conformal group (dimension 10). The source cloud $P$ has 29 points along three perpendicular arms of lengths $1.5$, $1.0$ and $0.5$, so that chirality and per-axis scale are visible. A single teacher motor $M^\star = \exp(B^\star)$ is drawn from the full conformal algebra (seed 13; coefficient scales $0.5$ for translations, $0.3$ for rotations, $0.25$ for the dilation and $0.12$ for transversions), and the target is $Y = \mathrm{down}\big(M^\star\,\mathrm{up}(P)\,\widetilde{M^\star}\big) + \sigma\epsilon$, where $\mathrm{up}(x) = e_o + x + \frac12|x|^2e_\infty$. The loss is the sum of squared residuals over the $29\times3 = 87$ coordinates. All four groups start from the same masked initialization and are optimized in one vectorized run of 500 multiplicative Adam steps at learning rate $0.1$, which takes about 4\,s on an Apple M3 laptop CPU. As references we run BFGS on each group's active coordinates and, for the three restricted groups, the closed-form optima (centroid alignment, Procrustes, and Umeyama).

Table~\ref{tab:cga_subgroups} lists the final losses. The three restricted groups reach their closed-form optima, and the conformal group recovers the teacher: noise-free, the final motor matches $M^\star$ to a relative error of $6\cdot10^{-12}$, while the static chart stalls at a loss of $8.1\cdot10^{-4}$ and recovers the teacher coefficients only to $1.6\cdot10^{-3}$. With noise $\sigma = 0.05$, all methods reach the same losses, and the conformal loss sits near the expected floor of $(87-10)\sigma^2 \approx 0.19$. The difference between the updates is starker away from the identity: started from a rotation of 4\,rad, the descent path crosses $g = -1$, a conjugate point of $\exp$ at which every chart is singular. The multiplicative update passes through and converges (to $-M^\star$, the same transformation) with loss $9\cdot10^{-21}$, whereas the static chart stalls at $0.41$ and the corrected chart diverges (Fig.~\ref{fig:lie_update_rule}). Finally, the renormalization trick $g\leftarrow g(g\tilde g)^{-1/2}$, which works in $\mathrm{Cl}(1,3)$ (see below), is not a retraction in $\mathrm{Cl}(4,1)$: there $g\tilde g$ has a grade-4 part, the iterate leaves the group ($|g\tilde g - 1|\sim10^{14}$), and the loss stalls near $10^2$.

\begin{table}[htbp]
\centering
\caption{Final losses for the four subgroups of the conformal group of Fig.~\ref{fig:lie_subalgebras}, fitted to one shared conformal target, noise-free (top) and with noise $\sigma = 0.05$ (bottom).}
\label{tab:cga_subgroups}
\small
\begin{tabular}{@{}llccccc@{}}
\toprule
& group & dim & multiplicative (ours) & static chart & BFGS & closed form \\
\midrule
$\sigma = 0$ & $T(3)$ & 3 & 2.7797 & 2.7797 & 2.7797 & 2.7797 \\
& $SE(3)$ & 6 & 1.5897 & 1.5897 & 1.5897 & 1.5897 \\
& $\mathrm{Sim}(3)$ & 7 & 0.39614 & 0.39614 & 0.39614 & 0.39614 \\
& conformal & 10 & $1.7\cdot10^{-21}$ & $8.1\cdot10^{-4}$ & $2.5\cdot10^{-21}$ & -- \\
\midrule
$\sigma = 0.05$ & $T(3)$ & 3 & 2.9138 & 2.9138 & 2.9138 & 2.9138 \\
& $SE(3)$ & 6 & 1.6620 & 1.6620 & 1.6620 & 1.6620 \\
& $\mathrm{Sim}(3)$ & 7 & 0.56218 & 0.56218 & 0.56218 & 0.56218 \\
& conformal & 10 & 0.21071 & 0.21268 & 0.21071 & -- \\
\bottomrule
\end{tabular}
\end{table}

\begin{figure}[htbp]
\centering
\includegraphics[width=0.98\textwidth]{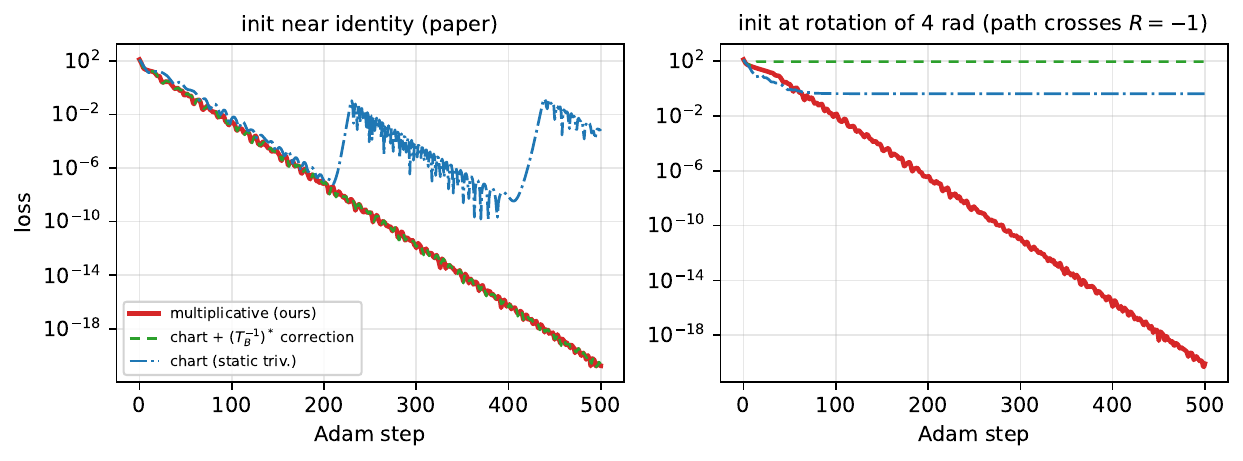}
\caption{Training loss on the full conformal group under the multiplicative update (ours), the static chart, and the corrected chart. \emph{Left:} from the default initialization near the identity. \emph{Right:} from an initialization at a rotation of 4\,rad, where the descent path crosses the conjugate point $g = -1$ of $\exp$.}
\label{fig:lie_update_rule}
\end{figure}

\paragraph{Lorentz frame recovery}
We work in $\mathrm{Cl}(1,3)$ with $e_0$ timelike, as in Appendix~\ref{sec:em_details}, and $\mathfrak{g} = \mathfrak{spin}(1,3)$ spanned by the three boosts $e_0\wedge e_i$ and three rotations $e_i\wedge e_j$. A static unit charge at the origin produces the Coulomb field $F_k$ at $K = 48$ events on the shell $0.8\le|r|\le2$. The unknown ship frame is the rotor $R^\star$ combining a boost of generator rapidity $0.8$ along $(1, 0.5, 0.2)$ with a rotation of $0.7$\,rad about $(0.3, -1, 0.6)$, and the ship measures $\tilde F_k = R^\star F_k\widetilde{R^\star} + \epsilon_k$, with Gaussian noise at $5\%$ of the RMS signal ($\sigma = 1.83\cdot10^{-2}$ per bivector coefficient). We minimize $\mathcal{L}(R) = \frac{1}{6K}\sum_k\lVert\tilde F_k - RF_k\tilde R\rVert^2$, whose noise floor is $\sigma^2$. We compare the multiplicative update against the static and corrected charts, BFGS on the chart coordinates, and three parameterizations one would write without a Lorentz-group library: (i) the eight even-grade coefficients of $R$, renormalized after every step by $R\leftarrow R(R\tilde R)^{-1/2}$ (in $\mathrm{Cl}(1,3)$, $R\tilde R = s + pI$ with $I^2 = -1$, so this is a complex inverse square root that must be re-derived for each algebra); (ii) a $4\times4$ matrix $\Lambda$ with the penalty $\lambda\lVert\Lambda^\top\eta\Lambda - \eta\rVert_F^2$, with $\eta$ the Minkowski metric and $\lambda\in\{0.1, 1, 10\}$; and (iii) the same matrix projected onto its polar factor after every step, the standard retraction of compact-manifold libraries, which projects onto $O(4)$. Every method runs 400 Adam steps from the same initialization, with the learning rate chosen from $\{0.3, 0.1, 0.03, 0.01\}$ as the smallest one within $5\%$ of the best final loss. We read the observer's rapidity off the recovered rotor from the image of the time axis; the ground truth is $0.7957$ (speed $0.662c$), which differs from $2|B^\star_{\mathrm{boost}}| = 0.8$ by the Baker--Campbell--Hausdorff cross terms between the non-parallel boost and rotation. The whole comparison, including the learning-rate sweeps, takes 20\,s on an Apple M3 laptop CPU.

Table~\ref{tab:lorentz} reports the results. Every on-group method reaches the noise floor, and on this task the three trivialized updates agree to all printed digits. The penalized matrix also reaches the floor but leaves the group (constraint violation $1.9\cdot10^{-3}$). The polar projection stalls at $48\sigma^2$ with violation $1.7$, because $O(4)$ contains no boosts.

\begin{table}[htbp]
\centering
\caption{Lorentz frame recovery from 48 noisy Faraday bivectors in $\mathrm{Cl}(1,3)$ (400 Adam steps; noise floor $\sigma^2 = 3.36\cdot10^{-4}$). Violation is $\lVert\Lambda^\top\eta\Lambda - \eta\rVert_F$ for the vector representation $\Lambda$ of the recovered transformation; $\Lambda$ error is $\min_\pm\lVert\Lambda\mp\Lambda^\star\rVert_F/\lVert\Lambda^\star\rVert_F$; rapidity error is against the ground truth $0.7957$.}
\label{tab:lorentz}
\small
\begin{tabular}{@{}lcccc@{}}
\toprule
method & loss$/\sigma^2$ & violation & $\Lambda$ error & rapidity error \\
\midrule
multiplicative (ours) & 1.03 & $7.3\cdot10^{-16}$ & $3.9\cdot10^{-3}$ & $1.4\cdot10^{-3}$ \\
static chart & 1.03 & $1.0\cdot10^{-15}$ & $3.9\cdot10^{-3}$ & $1.4\cdot10^{-3}$ \\
corrected chart & 1.03 & $6.1\cdot10^{-16}$ & $3.9\cdot10^{-3}$ & $1.4\cdot10^{-3}$ \\
chart, BFGS (40 steps) & 1.03 & -- & $3.9\cdot10^{-3}$ & $1.4\cdot10^{-3}$ \\
even multivector + renormalize & 1.06 & $2.0\cdot10^{-15}$ & $1.0\cdot10^{-2}$ & $1.1\cdot10^{-2}$ \\
matrix + penalty ($\lambda = 0.1$) & 1.03 & $1.9\cdot10^{-3}$ & $3.9\cdot10^{-3}$ & $1.7\cdot10^{-3}$ \\
matrix + $O(4)$ polar projection & 47.8 & $1.7$ & $0.64$ & $0.23$ \\
\bottomrule
\end{tabular}
\end{table}

\end{document}